\def\eqref#1{equation~\ref{#1}}
\def\1{\bm{1}}
\DeclareMathAlphabet{\mathsfit}{\encodingdefault}{\sfdefault}{m}{sl}
\SetMathAlphabet{\mathsfit}{bold}{\encodingdefault}{\sfdefault}{bx}{n}
\DeclareMathOperator*{\argmax}{arg\,max}
\newtheorem{theorem}{Theorem}
\newtheorem{proposition}[theorem]{Proposition}
\newtheorem{lemma}[theorem]{Lemma}
\newtheorem{corollary}[theorem]{Corollary}
\newtheorem{remark}{Remark}
\title{A Revenue Function for Comparison-Based Hierarchical Clustering}
\author{\name Aishik Mandal$^*$ \email jitaishik@iitkgp.ac.in \\
      \addr Centre of Excellence in Artificial Intelligence\\
       Indian Institute of Technology Kharagpur 
      \AND
      \name Micha{\"e}l Perrot$^*$ \email michael.perrot@inria.fr \\
      \addr Univ. Lille, Inria,  CNRS, Centrale Lille, UMR 9189 - CRIStAL, F-59000 Lille, France
      \AND
      \name Debarghya Ghoshdastidar$^*$ \email ghoshdas@cit.tum.de\\
      \addr Technical University of Munich \\  School of Computation, Information and Technology \\
      Munich Data Science Institute}
\begin{document}

\maketitle
\def\thefootnote{*}\footnotetext{All authors contributed equally.}
\begin{abstract}
Comparison-based learning addresses the problem of learning when, instead of explicit features or pairwise similarities, one only has access to comparisons of the form: \emph{Object $A$ is more similar to $B$ than to $C$.} 
Recently, it has been shown that, in Hierarchical Clustering, single and complete linkage can be directly implemented using only such comparisons while several algorithms have been proposed to emulate the behaviour of average linkage. Hence, finding hierarchies (or dendrograms) using only comparisons is a well understood problem. However, evaluating their meaningfulness when no ground-truth nor explicit similarities are available remains an open question.

In this paper, we bridge this gap by proposing a new revenue function that allows one to measure the goodness of dendrograms using only comparisons. 
We show that this function is closely related to Dasgupta's cost for hierarchical clustering that uses pairwise similarities.
On the theoretical side, we use the proposed revenue function to resolve the open problem of whether one can approximately recover a latent hierarchy using few triplet comparisons.
On the practical side, we present principled algorithms for comparison-based hierarchical clustering based on the maximisation of the revenue and we empirically compare them with existing methods.
\end{abstract}

\section{Introduction}
In the past decade, there has been an exponential growth in the scope of data science and machine learning in domains such as psycho-physics \citep{shepard1962theanalysis,stewart2005absolute,haghiri2020} or cultural psychology \citep{berenhaut2022social}, evolutionary biology \citep{foulds1979graph,semple2003phylogenetics,Catanzaro09}, or crowd-sourcing \citep{heikinheimo2013crowd,ukkonen2017crowdsourced} among others.
A type of data that has recently gained some traction in these contexts is \emph{comparisons} \citep{stewart2005absolute,agarwal2007generalized}, particularly in the form of:

\textbf{Triplet comparison}: Binary response to the query---is object $i$ more similar to object $j$ than to object $k$?

\textbf{Quadruplet comparison}: Binary response to the query ---are objects $i$ and $j$ more similar to each other than objects $k$ and $l$?

Comparisons have been used in the psycho-physics literature for more than 50 years since it is known that humans can provide relative measurements better than absolute ones \citep{shepard1962theanalysis,stewart2005absolute}. It led to a surge of popularity of comparisons in the context of crowdsourced data about objects that cannot be represented by Euclidean features, such as food \citep{wilber2014hcomp} or musical artists \citep{daniel2002quest}, or objects for which humans cannot robustly estimate a pairwise similarity, for instance cars \citep{kleindessner2017kernel} or natural scenes \citep{heikinheimo2013crowd}.
The purpose of collecting comparisons is often to learn patterns in the objects, such as latent clusters, or use them for prediction, as in classification.
Hence, there has been significant development of algorithms for comparison-based learning \citep{agarwal2007generalized,heikinheimo2013crowd,haghiri2017comparison,kazemi2018comparison,perrot2019boosting}.

The present paper focuses on comparison-based hierarchical clustering.
Clustering refers to partitioning a dataset into groups of similar objects, while
hierarchical clustering is the problem of finding partitions of the data at different levels of granularity. 
It is natural to wonder how one can group similar objects or find a hierarchy of groups when neither features nor pairwise similarities are available, and one only has access to triplet or quadruplet comparisons.
For instance, the objects in the food dataset \citep{wilber2014hcomp} can be broadly categorised into `sweets or desserts' and `main or savoury dishes', but the latter can be further sub-divided into meat dishes, soups and others.
Surprisingly, interest in comparison-based (hierarchical) clustering stemmed in 1970s when single linkage clustering gained popularity, and researchers realised that the method uses only ordinal informal instead of absolute values of pairwise similarities \citep{Jardine71,sibson1972order,janowitz1979monotone}.
Around the same time, works also started on the \emph{consensus tree problem}, that is, constructing trees (hierarchies) from given sub-trees or ordinal relations \citep{adams1972consensus,AhoSSU81}. The problem has since evolved as an important topic in both computational biology and computer science, most notably addressing the question of phylogenetic tree reconstruction under triplet or other ordinal constraints \cite{semple2003phylogenetics,Wu04,SnirY11}.
More recently, \citet{ghoshdastidar2019foundations} re-discovered that single and complete linkage can be computed using only few \emph{actively chosen} quadruplet comparisons. 
There are, however, limited practical settings where the learning algorithm can actively decide which comparisons should be queried, and the most relevant case is that of learning from a set of  \emph{passively collected} comparisons. 
This is certainly true in the known practical applications of comparison-based hierarchical clustering, such as (hierarchical) clustering of objects from crowd-sourced comparisons \citep{ukkonen2017crowdsourced,kleindessner2017kernel}, finding communities in languages or in cultural psychology \citep{berenhaut2022social}, and  constructing relational database queries \cite{AhoSSU81} or phylogenetic trees \citep{semple2003phylogenetics}.

One of the fundamental problems in hierarchical clustering is to evaluate the \emph{goodness} of a hierarchy.
This issue is obviously inherent to identifying \emph{better} hierarchical clustering algorithms. 
In the phylogenetics literature, the optimal hierarchy problem typically corresponds to the \emph{minimum evolution problem}, where, given a set of species and a pairwise distance matrix (representing evolutionary distance between species), the goal is to find a weighted tree with minimal total edge weights that preserve the evolutionary distances \citep{foulds1979graph,Catanzaro09}.
A similar philosophy exists in the early works on hierarchical clustering, where an algorithm is judged to better if the ultrametric induced by the output tree is closer to the specified pairwise dissimilarities among the given objects \citep{janowitz1979monotone}.
More recently, there has been efforts to mathematically quantify the goodness of a hierarchy in terms of certain \emph{cost or revenue functions} \citep{dasgupta2016cost,moseley2017approximation,WangW20}.
Such formulations have led to a plethora of new methods for hierarchical clustering that also come with worst-case approximation guarantees \citep{cohenaddad2018hierarchical,CharikarCN19,ChatziafratisMA21}.


\paragraph{Motivation for this work and our contributions.}
The main motivation for this work stems from the lack of cost or revenue functions that can be used in the comparison-based framework. 
Available goodness measures for trees can only be defined using pairwise (dis)similarities \citep{dasgupta2016cost,moseley2017approximation,WangW20}. Hence, existing works on comparison-based hierarchical clustering either demonstrate the meaningfulness of the computed hierarchies visually or in artificial settings, where comparisons are derived from pairwise similarities \citep{kleindessner2017kernel,ghoshdastidar2019foundations}. Neither solution is useful in practice, where one only has access to comparisons. 
In this paper, we propose new revenue functions for dendrograms that are only based on triplet or quadruplet comparisons (Section \ref{compcost}).
We show that the proposed comparison-based revenues are equivalent to Dasgupta's cost or revenue \citep{dasgupta2016cost,moseley2017approximation} applied to particular pairwise similarities that can be computed from comparisons.
Interestingly, the pairwise similarities that arise from this equivalence are known in the comparison-based clustering literature \citep{perrot2020near}. 

Section \ref{sec:theory-latent} demonstrates that the proposed revenue function meaningfully captures the goodness of a hierarchical tree. For this purpose, we consider the problem of reconstructing a latent hierarchy (for example, a phylogeny tree) from ordinal constraints \citep{emamjomeh2018adaptive}.
In particular, we show that, when all possible triplets among the objects are available, the dendrogram corresponding to the latent hierarchy maximises the proposed revenue function. This, in turn, implies that one can mathematically formulate the triplet-based hierarchical clustering problem as a \emph{maximum triplet comparison revenue problem}.
We further address the question of whether one can approximately recover the latent hierarchy using fewer than $\Omega(n^3)$ triplets. This problem has not been directly addressed in previous works (see Section \ref{sec:related}). We show  that only $O(n^2 \log n / \epsilon^2)$ passive triplets suffice to obtain a $(1-\epsilon)$-approximation of the optimal revenue.

Finally, Sections \ref{sec:algorithms}--\ref{sec:experiments} use the connection of the proposed revenue functions to the \emph{additive similarities} in \citet{perrot2020near} to present two variants of average linkage hierarchical clustering based on passive triplet or quadruplet comparisons.
The performance of these approaches is empirically compared with state of the art baselines using synthetic and real datasets.

\section{Related Work}
\label{sec:related}
In this section, we briefly review the algorithmic developments of comparison-based hierarchical clustering, as well as existing theoretical results related to this problem.
As noted earlier, interest in comparison-based hierarchical clustering stemmed from different applications. 
The current literature consists of two lines of research---works related to reconstruction of phylogenetic trees \citep{Wu04,SnirY11,ChatziafratisMA21} and those focusing on ordinal data analysis from crowd-sourced data \citep{kleindessner2017kernel,ghoshdastidar2019foundations}.

In ordinal data analysis literature, the most widely used principle is that of ordinal embedding, where the underlying idea is to retrieve Euclidean representations of the objects that respect the available comparisons as well as possible (see the review in \citet{vankadara2019large} for more details). The embedded data can be subsequently used for (hierarchical) clustering. While this principle provides flexibility in the choice of clustering methods, the Euclidean restriction of the underlying data often leads to inaccurate representations, and hence, poor performance in the context of hierarchical clustering \citep{ghoshdastidar2019foundations}.
The restrictive assumption of Euclidean embedding is avoided by computing pairwise similarities from available comparisons \citep{kleindessner2017kernel,ghoshdastidar2019foundations}. Standard hierarchical clustering algorithms, such as average linkage, can then be applied using the pairwise similarities. 

An alternative approach for comparison-based (hierarchical) clustering is to define an appropriate cost or objective based on comparison and directly optimise it. \citet{ukkonen2017crowdsourced} employs such a technique for clustering using crowd-sourced data, while this principle underlies most techniques in consensus tree problems or phylogenetic tree reconstruction.
In the latter context, two well-studied optimisation problems are \emph{maximum rooted triplet consistency} \citep{Wu04,ByrkaGJ10}---finding a hierarchy that satisfies most, if not all, given triplets---and \emph{maximum quartet consistency} \citep{SnirY11,JiangKL00}---where one has access to quartets (sub-trees with four leaves indicating which pairs should be merged first) and the problem is to find a tree that satisfies most given quartets.\footnote{Note that quartets are different from quadruplets though both are defined on four objects. More precisely, a quartet on $i,j,k,l$ corresponds to information that $i,j$ and $k,l$ should be merged in the tree before all four are merged. Using the notation from Section \ref{sec:background}, a quadruplet $(i,j,k,l)$ only implies $s_{ij}>s_{kl}$ whereas a quartet on $i,j,k,l$ implies $\min\{s_{ij},s_{kl}\}>\max\{s_{ik},s_{il},s_{jk},s_{jl}\}$.}
Other related optimisation problems as well as various constraints other than triplets or quartets have been studied \citep{SnirR10,ChatziafratisMA21}.
Since the focus of the present paper is to define a revenue for trees (see Section \ref{compcost}), our work naturally belongs to this broad class of hierarchical clustering algorithms based on revenue maximisation. However, in Theorem \ref{thm_equivalence}, we relate the proposed revenues to pairwise similarities computed from comparisons. Hence, the present paper connects the optimisation principle to the aforementioned approach of defining pairwise similarities from comparisons.


Prior works on comparison-based hierarchical clustering provide a range of computational and statistical results. On the computational side, it is known that both the problems of maximum rooted triplet consistency and maximum quartet consistency are NP-hard \citep{ByrkaGJ10,SnirY11}. However, polynomial-time constant factor approximation algorithms are known in both cases, assuming that a uniformly random subset of triplets/quartets is available. For triplets, \citet{Wu04} provides a $\frac13$-approximation algorithm---a fraction at least $\frac13$ of the given triplets are satisfied---which is slightly improved in \citet{ByrkaGJ10}. 
For quartets, polynomial-time algorithms that satisfy at least a $(1-\epsilon)$-fraction of the given quartets are known \citep{JiangKL00,SnirY11}.
While the above results focus on finding hierarchies that only match the given triplets/quartets, \citet{emamjomeh2018adaptive} show that the true (latent) hierarchy can be recovered only if $\Omega(n^3)$ passive (uniformly sampled) triplets are available. 
In contrast, only $O(n\log n)$ triplets suffice if they are actively queried.
In Section \ref{sec:theory-latent}, we show that only $O(n^2 \log n / \epsilon^2)$ uniformly sampled triplets suffice to obtain a $(1-\epsilon)$-approximation of the optimal triplet revenue.

A different latent model is considered in 
\citet{ghoshdastidar2019foundations} and \citet{perrot2020near}, where the objects have latent (noisy) pairwise similarities that have a (hierarchical) cluster structure. Noisy triplets/quadruplets are uniformly sampled following the noisy latent similarities. While \citet{ghoshdastidar2019foundations} focus on quadruplet-based hierarchical clustering and show that $O(n^{3.5}\log n)$ suffice to recover the latent hierarchy, \citet{perrot2020near} show that flat latent clusters can be exactly recovered using only $O(n^2 \log n)$ uniformly sampled triplets/quadruplets. 
Although our model is different from \citet{perrot2020near}, we obtain a similar $O(n^2 \log n)$ upper bound on sample complexity---even when the triplets are noisy.

\section{Preliminaries}
\label{sec:background}

We consider the problem of hierarchical clustering of a set of $n$ objects, denoted by $[n] = \{1,2,\ldots,n\}$.
In this paper, we assume that a hierarchy or dendrogram on $[n]$ is a binary tree $H$ whose root node is the set $[n]$, each leaf node is a singleton containing one of the $n$ objects, and each internal node represents a set $C\subseteq [n]$ with its two children, $C_1$ and $C_2$, denoting a partition of $C$, that is $\min(|C_1|,|C_2|) > 0$, $C = C_1 \cup C_2$, and $C_1 \cap C_2 = \emptyset$. In the following, we use binary tree or tree to designate a hierarchy.
For node $C$, we use $H(C)$ to denote the subtree rooted at $C$ and $|H(C)|$ represents the number of leaves in the subtree, or equivalently, the number of objects in the set $C$. For objects $i,j\in [n]$, let $i\lor j$ denote the smallest node in the tree containing both $i$ and $j$, and  $H(i \lor j)$ denote the smallest subtree containing both $i$ and $j$. 
The goal of hierarchical clustering is to find a dendrogram $H$ that is \emph{optimal}, or at least \emph{good}, in some sense.
In the next subsections, we recall Dasgupta's cost for hierarchical clustering that allows one to measure the goodness of a dendrogram given full access to pairwise similarities, and then describe the comparison-based learning framework, where only triplet or quadruplet comparisons are available.
In the paper, we use the standard Landau notations $O(\cdot)$, $\Omega(\cdot)$, $o(\cdot)$, where the asymptotics are defined with respect to $n$.

\subsection{Dasgupta's Cost for Hierarchical Clustering}
\label{Dcost}
Suppose one has access to a function $s: [n] \times [n] \to \mathbb{R}$, symmetric, such that $s_{ij} = s(i,j)$ denotes the pairwise similarity between objects $i,j\in[n]$. Dasgupta's cost function \citep{dasgupta2016cost} for a dendrogram $H$ on $[n]$, with respect to the pairwise similarity $s$,  is defined as 
\begin{align}
Dcost(H,s) &= \sum_{{i,j \in [n], \ i<j}} s_{ij} \cdot |H(i \lor j)| \,.
\end{align}
An equivalent definition of the above cost can be found in \citet{WangW20}, where the cost is expressed in terms of triplets of objects instead of pairs.
To find a good dendrogram, \citet{dasgupta2016cost} proposed to minimize this cost over all trees. While it is NP-hard to find the optimal solution, several relaxations are known to have constant factor approximation guarantees for Dasgupta's cost or related quantities. 
In particular, \citet{moseley2017approximation} defined Dasgupta's revenue function,
\begin{align}
Drev(H, s) =  n \sum_{i,j \in [n], \ i<j}s_{ij} - Dcost(H, s).
\end{align}
Note that since $\sum s_{ij}$ is fixed, maximising $Drev(H,s)$ over all binary trees is equivalent to minimising $Dcost(H,s)$.
It can then be shown that the revenue of the tree $H$ obtained from average linkage achieves a revenue $Drev(H,s)$ that is at least $\frac13$ of the revenue of the optimal tree that would achieve the best revenue, provided that the similarity function $s$ is non-negative.

\subsection{Comparison-based Learning}
\label{comp_framework}
In the present paper, we assume that the pairwise similarities $\{s_{ij}\}_{i,j\in[n]}$ are not available.
Instead the algorithm has access to either a set of triplets $\mathcal{T}$, which is a subset of 
\begin{align*}
  \mathcal{T}_{all} = \{(i,j,k) \in [n]^3 : s_{ij}>s_{ik},\ i,j,k \ \text{distinct}\},
\end{align*}
or a set of quadruplets $\mathcal{Q} \subseteq \mathcal{Q}_{all}$, where
\begin{align*}
    \mathcal{Q}_{all} = \{(i,j,k,l) \in [n]^4 :~ &s_{ij}>s_{kl},i < j, \ k < l, \ (i,j) \neq (k,l)\}.
\end{align*}
Since the pairwise similarities are assumed symmetric, we set $i<j$ and $k<l$ to avoid considering the same comparison multiple times.
We note that the number of possible comparisons is high---$|\mathcal{Q}_{all}| =O(n^4)$ and $|\mathcal{T}_{all}| =O(n^3)$---but, in practice, the observed comparisons, $\mathcal{T} $ or $\mathcal{Q}$ may be fewer than that, about $O(n^2)$ comparisons, as can be seen from Table~\ref{tab:datasets}.
Note that whenever a triple $(i,j,k)$ is considered,  $\mathcal{T}$ contains either $(i,j,k)$ or $(i,k,j)$, depending on whether $s_{ij} \lessgtr s_{ik}$. The same holds for quadruplets.
We further assume that the observed set of comparisons $\mathcal{T}$, or $\mathcal{Q}$, is passively collected, that is the algorithm cannot decide which comparisons should be present in it---as opposed to the active setting where the algorithm can choose which comparisons should be observed~\citep{ghoshdastidar2019foundations}.

\section{Comparison-based Revenue}
\label{compcost}

We present two comparison-based revenue functions for hierarchical clustering, one in the triplets framework and the other for quadruplet comparisons.

\paragraph{Triplet comparisons.} We first consider the case of triplets and assume that the algorithm has access to a passively collected set of triplets $\mathcal{T}$. 
We define the triplet comparison revenue of a binary tree (dendrogram) $H$ on $[n]$, using triplets $\mathcal{T}$, as
\begin{equation} 
Trev(H,\mathcal{T}) =\sum_{(i,j,k)\in\mathcal{T}}\Big(|H(i \lor k)| - |H(i \lor j)|\Big).
\label{eqn_Trev}
\end{equation}
For every $(i,j,k)\in \mathcal{T}$, we know that $i$ is more similar to $j$ than to $k$, and hence, we prefer to merge $i,j$ before merging $i$ and $k$. It means the ideal tree should have $|H(i \lor k)| > |H(i \lor j)|$ for every $(i,j,k)\in \mathcal{T}$.
Hence, it is desirable to maximise $|H(i \lor k)| - |H(i \lor j)|$ for every observed triplet $(i,j,k)\in\mathcal{T}$.
We then propose to formulate triplets comparison-based hierarchical clustering as the problem of maximizing $Trev(H,\mathcal{T})$ over all binary trees.

\begin{remark}
We note that the proposed triplet revenue is significantly different from the triplet based cost presented in \citet{WangW20}. The most important distinction is that the triplet cost in \citet{WangW20} is a reformulation of Dasgupta's cost, and requires knowledge of pairwise similarities. In contrast, the revenue in \eqref{eqn_Trev} is computed only from triplet comparisons without access to pairwise similarities. 
\end{remark}

\paragraph{Quadruplet comparisons.} The above formulation can be similarly stated in the quadruplets setting.
Assuming that the algorithm has access to a passively collected set of quadruplets $\mathcal{Q}$,
we define the quadruplet comparison revenue of a binary tree $H$ on $[n]$ as
\begin{equation} 
Qrev(H,\mathcal{Q}) =\sum_{(i,j,k,l)\in\mathcal{Q}} \Big(|H(k \lor l)| - |H(i \lor j)|\Big).
\label{eqn_Qrev}
\end{equation}
Similar to the triplet setting, every $(i,j,k,l)\in \mathcal{Q}$ indicates that $i,j$ should be merged earlier than $k,l$ in $H$, and we prefer trees such that $|H(k \lor l)| \geq |H(i \lor j)|$. We propose to achieve this by finding a tree that maximises $Qrev(H,\mathcal{Q})$.

\paragraph{Connection with Dasgupta's cost.} While one may try to directly maximise the above comparison-based revenue functions, the following equivalence to Dasgupta's cost and revenue allows us to employ existing methods for hierarchical clustering that require pairwise similarities.
In the following, let $\mathbb{I}_E$ denote the indicator of event $E$, that is, $\mathbb{I}_E= 1$ if $E$ happens, and 0 otherwise. 

\begin{theorem}
\label{thm_equivalence}
For any given set of triplets $\mathcal{T}$ and any dendrogram $H$ on $[n]$,
\begin{displaymath}
Trev(H,\mathcal{T}) = - Dcost(H,s^{AddS3}) = Drev(H,s^{AddS3}),
\end{displaymath}
where $s^{AddS3}$ refers to the additive similarity from triplets (AddS3) defined by \citet{perrot2020near}
\begin{displaymath}
s^{AddS3}_{ij} = \sum_{k\neq i,j} \Big( \mathbb{I}_{(i,j,k)\in\mathcal{T}} - \mathbb{I}_{(i,k,j)\in\mathcal{T}} + \mathbb{I}_{(j,i,k)\in\mathcal{T}} - \mathbb{I}_{(j,k,i)\in\mathcal{T}}\Big)
\end{displaymath}
Similarly, for any set of quadruplets $\mathcal{Q}$ and dendrogram $H$,
\begin{displaymath}
Qrev(H,\mathcal{Q}) = - Dcost(H,s^{AddS4}) = Drev(H,s^{AddS4}),
\end{displaymath}
where $s^{AddS4}$ is the additive similarity from quadruplets (AddS4) defined by \citet{perrot2020near}
\begin{displaymath}
s^{AddS4}_{ij} = \sum_{k \neq l, \ (k,l) \neq (i,j)}  \Big( \mathbb{I}_{(i,j,k,l)\in\mathcal{Q}} - \mathbb{I}_{(k,l,i,j)\in\mathcal{Q}} \Big) \;.
\end{displaymath}
\end{theorem}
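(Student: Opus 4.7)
The proof is a direct computation that reorganises the sum defining $Dcost(H, s^{AddS3})$ according to triplets rather than pairs. The plan is to show that the contribution of each individual triplet $(a,b,c)\in\mathcal{T}$ to $s^{AddS3}$ is concentrated on two pairs of objects in a telescoping way that exactly reconstructs the summand of $Trev(H,\mathcal{T})$.

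First I would fix an arbitrary triplet $(a,b,c)\in\mathcal{T}$ and identify every place where its indicator $\mathbb{I}_{(a,b,c)\in\mathcal{T}}$ appears in the four-term sum defining $s^{AddS3}_{ij}$. A case analysis over the four indicators gives that $(a,b,c)$ appears with coefficient $+1$ in $s^{AddS3}_{ab}$ (from the $\mathbb{I}_{(i,j,k)\in\mathcal{T}}$ term taking $i=a,j=b,k=c$) and with coefficient $-1$ in $s^{AddS3}_{ac}$ (from the $-\mathbb{I}_{(i,k,j)\in\mathcal{T}}$ term taking $i=a,k=b,j=c$), and nowhere else, using the symmetry $s^{AddS3}_{ij}=s^{AddS3}_{ji}$ so that one does not double-count the unordered pair.

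Next I would swap the order of summation in $Dcost(H,s^{AddS3})=\sum_{i<j} s^{AddS3}_{ij}|H(i\lor j)|$, grouping by triplet. From the previous step, the total contribution of $(a,b,c)\in\mathcal{T}$ is $|H(a\lor b)|-|H(a\lor c)|$, so
\begin{equation*}
Dcost(H,s^{AddS3}) \;=\; \sum_{(a,b,c)\in\mathcal{T}} \bigl(|H(a\lor b)|-|H(a\lor c)|\bigr) \;=\; -Trev(H,\mathcal{T}),
\end{equation*}
which is the first equality. For the second equality $-Dcost(H,s^{AddS3})=Drev(H,s^{AddS3})$, I would observe that, by definition, $Drev(H,s)=n\sum_{i<j} s_{ij}-Dcost(H,s)$, so it suffices to check $\sum_{i<j} s^{AddS3}_{ij}=0$. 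This again follows by summing the per-triplet contributions: each $(a,b,c)\in\mathcal{T}$ contributes $+1$ to $s^{AddS3}_{ab}$ and $-1$ to $s^{AddS3}_{ac}$, summing to zero.

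The quadruplet case is handled by the same bookkeeping argument. For $(i,j,k,l)\in\mathcal{Q}$, the definition of $s^{AddS4}_{ij}$ contains the positive indicator $\mathbb{I}_{(i,j,k,l)\in\mathcal{Q}}$ at pair $\{i,j\}$ and the negative indicator $-\mathbb{I}_{(k,l,i,j)\in\mathcal{Q}}$ (viewed as $-\mathbb{I}_{(i,j,k,l)\in\mathcal{Q}}$ appearing inside $s^{AddS4}_{kl}$) at pair $\{k,l\}$. Reordering the sum as before yields $Dcost(H,s^{AddS4})=\sum_{(i,j,k,l)\in\mathcal{Q}} (|H(i\lor j)|-|H(k\lor l)|)=-Qrev(H,\mathcal{Q})$, and $\sum_{i<j} s^{AddS4}_{ij}=0$ follows analogously, giving $-Dcost(H,s^{AddS4})=Drev(H,s^{AddS4})$. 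The only real care-point in the whole argument, and hence the main obstacle, is the combinatorial bookkeeping in the triplet case: correctly pairing the four signed indicators in $s^{AddS3}_{ij}$ with the two unordered pairs $\{a,b\}$ and $\{a,c\}$ determined by each oriented triplet, without inadvertently double-counting or missing contributions.
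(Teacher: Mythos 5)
Your proposal is correct and is essentially the same argument as the paper's: both proofs are the same exchange of summation order, the paper running it from $Trev$ (collecting, for each pair $\{i,j\}$, the signed coefficients over all $k$ to recover $-s^{AddS3}_{ij}$) and you running it from $Dcost$ (distributing each triplet's indicator over the two unordered pairs it touches), with the identical observation $\sum_{i<j} s^{AddS3}_{ij}=0$ for the second equality and the analogous bookkeeping for quadruplets. Your accounting of the four signed appearances of $\mathbb{I}_{(a,b,c)\in\mathcal{T}}$ collapsing to $+1$ on $\{a,b\}$ and $-1$ on $\{a,c\}$ is accurate.
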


\begin{proof}[Proof idea (details in appendix)]
Proving $Trev(H,\mathcal{T}) = - Dcost(H,s^{AddS3})$ involves a rearrangement of terms, with the observation that, for every $i,j$, the term $|H(i \lor j)|$ appears in the summation in \eqref{eqn_Trev} with coefficient $-1$ when $(i,j,k)\in\mathcal{T}$ or $(j,i,k)\in\mathcal{T}$ and with coefficient $+1$ when $(i,k,j)\in\mathcal{T}$ or $(j,k,i)\in\mathcal{T}$. Adding these coefficients for all $k\neq i,j$ gives us $-s^{AddS3}_{ij}$, and proves the equality.
The second equality $- Dcost(H,s^{AddS3}) = Drev(H,s^{AddS3})$ simply follows from the observation that $\sum_{i<j} s^{AddS3}_{ij} = 0$.
The proof for quadruplets is similar.
\end{proof}

\section{Recovering a Latent Hierarchy by Triplet Revenue Maximisation}
\label{sec:theory-latent}

In this section, we consider the problem of recovering a latent hierarchy from triplet comparisons, earlier studied in \citet{emamjomeh2018adaptive}.
Let $H_0$ be a hierarchy on $[n]$, from which we derive a set of triplets\footnote{%
\citet{emamjomeh2018adaptive} consider triples of the form $\{i,j,k\}$ that imply $i,j$ are closer to each other than $k$, with respect to $H_0$.  Each such triple $\{i,j,k\}$ correspond to two triplets $(i,j,k)$ and $(j,i,k)$ in our setting.}
\begin{equation}
\mathcal{T}_0 = \{(i,j,k), (j,i,k) ~:~ |H_0(i\vee j)| < \min(|H_0(i\vee k)|, |H_0(j\vee k)|)\}.
\label{eqn-T0-from-H0}
\end{equation}
One can show that any rooted tree $H'$ that satisfies all triplets in $\mathcal{T}_0$ is equivalent to $H_0$, up to isomorphic transformations, and hence, one can exactly recover $H_0$ given $\mathcal{T}_0$.
Note that $|\mathcal{T}_0| = n\binom{n}{2}$. 
It is natural to ask whether, $H_0$ can be recovered if a significantly smaller number of triplets are observed.
To this end, \citet{emamjomeh2018adaptive} show that if the algorithm can choose the triplets to be queried (active setting), then a deterministic algorithm can recover $H_0$ using only $n\log_2 n$ queries. However, the authors also construct a (randomised) $H_0$ to show that for any fixed set $\mathcal{T} \subset \mathcal{T}_0$ with fewer than $n^3/48$ triplets, the latent hierarchy $H_0$ cannot be recovered with probability at least $1/2$. 
This raises the question---can one approximately recover $H_0$ from a smaller set of triplets $\mathcal{T}$?  

We use the proposed triplets-based revenue to answer this question in the affirmative. 
Before providing an approximation guarantee, we first show the significance of our formulation in this context by proving that one can recover $H_0$ from $\mathcal{T}_0$ by maximising $Trev$.

\begin{proposition}
\label{thm-H0-maxrev}
Consider the aforementioned setting, where $H_0$ is a hierarchy on $[n]$ objects, and $\mathcal{T}_0$ is the corresponding set of triplets as defined above. Then
\begin{displaymath}
H_0 = \argmax_{H} ~Trev(H,\mathcal{T}_0),
\end{displaymath}
where the maximisation is over all binary trees $H$ on $[n]$.
\end{proposition}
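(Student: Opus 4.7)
The plan is to translate the claim through Theorem~\ref{thm_equivalence} into a statement about Dasgupta's revenue, then show $H_0$ maximises the revenue by a per-triple argument. By Theorem~\ref{thm_equivalence}, $Trev(H,\mathcal{T}_0) = Drev(H, s^{AddS3})$, so it suffices to prove that $H_0$ maximises $Drev(\cdot, s^{AddS3})$ over binary trees on $[n]$. The key auxiliary identity I would first establish is a triplet reformulation of Dasgupta's revenue: for any similarity $s$ and any binary tree $H$ on $[n]$,
\[
Drev(H, s) \;=\; \sum_{\{a,b,c\} \subseteq [n]} s_{\pi_H(\{a,b,c\})},
\]
where $\pi_H(\{a,b,c\})$ denotes the unique pair $\{x,y\}\subseteq\{a,b,c\}$ satisfying $|H(x\vee y)| < |H(x\vee z)| = |H(y\vee z)|$ for the remaining leaf $z$ (``the pair that merges first in $H$''). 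This follows because $n - |H(i \vee j)|$ equals exactly the number of $u \notin \{i,j\}$ for which $\{i,j\} = \pi_H(\{i,j,u\})$; substituting into $Drev(H,s) = n\sum_{i<j} s_{ij} - Dcost(H, s)$ and reindexing the resulting double sum by the unordered triple $\{i,j,u\}$ produces the identity.

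I would then compute $s^{AddS3}_{ij}$ explicitly in terms of $H_0$. For each $k \neq i,j$, exactly one of the following holds in $H_0$: (a) $k$ lies outside $H_0(i\vee j)$, so $\{i,j\}$ is the first-merging pair in $\{i,j,k\}$ and both $(i,j,k),(j,i,k) \in \mathcal{T}_0$ contribute $+2$ to the defining sum; (b) $k$ lies in the child of $H_0(i\vee j)$ containing $i$, so $\{i,k\}$ merges first and $(i,k,j)\in\mathcal{T}_0$ contributes $-1$; (c) symmetrically, $k$ lies on the $j$-side, contributing $-1$. Denoting the two children of $H_0(i\vee j)$ by $C_i, C_j$ and counting gives
\[
s^{AddS3}_{ij} \;=\; 2(n - |H_0(i\vee j)|) - (|C_i|-1) - (|C_j|-1) \;=\; 2n+2 - 3|H_0(i\vee j)|.
\]
Because $|H_0(\cdot \vee \cdot)|$ is an ultrametric (for every triple, two of the three pairwise LCA sizes equal the three-way LCA size and the third is strictly smaller), the affine decreasing transformation $s^{AddS3}$ satisfies, for every triple $\{a,b,c\}$ with $\pi_{H_0}(\{a,b,c\}) = \{a,b\}$, the strict inequality $s^{AddS3}_{ab} > s^{AddS3}_{ac} = s^{AddS3}_{bc}$. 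In other words, the first-merging pair of $H_0$ is the strictly largest-similarity pair within each triple.

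Combining these ingredients closes the argument. By the triplet reformulation, $Trev(H,\mathcal{T}_0) = \sum_{\{a,b,c\}} s^{AddS3}_{\pi_H(\{a,b,c\})}$, and by the tree-compatibility of $s^{AddS3}$ with $H_0$, each summand is bounded above by $s^{AddS3}_{\pi_{H_0}(\{a,b,c\})}$. Since $H_0$ realises this per-triple maximum simultaneously for every triple, we obtain $Trev(H,\mathcal{T}_0) \leq Trev(H_0,\mathcal{T}_0)$ for every binary tree $H$, which proves the proposition.

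The main technical step will be establishing the triplet reformulation of $Drev$; once it is in place, the remainder is a per-triple rearrangement argument exploiting the tree-compatibility of $s^{AddS3}$ with $H_0$. A minor subtlety is that the $\argmax$ should be understood up to isomorphism of binary trees, since relabeling internal nodes of $H_0$ preserves the revenue; the strict inequalities $s^{AddS3}_{ab} > s^{AddS3}_{ac}$ on each triple ensure that non-isomorphic trees fail to attain the maximum.
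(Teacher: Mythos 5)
Your proof is correct, and it takes a genuinely different route from the paper's. The paper proceeds via two lemmas: first, that $\mathcal{T}_0$ uniquely maximises $Trev(H_0,\cdot)$ over triplet sets induced by binary trees (by computing that each swapped triple pair contributes $3D_0(i,j,k)>0$ to the difference), and second, a swap symmetry $Trev(H_0,\mathcal{T}_1)=Trev(H_1,\mathcal{T}_0)$ proved from the explicit formula $s_{0ij}=2n+2-3|H_0(i\vee j)|$ together with the fact that $\sum_{i<j}|H(i\vee j)|$ is the same for all binary trees on $[n]$. You instead rewrite $Drev(H,s)$ as $\sum_{\{a,b,c\}} s_{\pi_H(\{a,b,c\})}$ (the \citet{WangW20}-style triple reformulation, whose derivation from $n-|H(i\vee j)| = \#\{u : \{i,j\}=\pi_H(\{i,j,u\})\}$ is correct), observe that the same explicit formula makes $s^{AddS3}$ a strictly decreasing affine function of $|H_0(i\vee j)|$ and hence ultrametric-compatible with $H_0$, and conclude that $H_0$ attains the per-triple maximum simultaneously. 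Your route is more direct: it gives a single inequality valid for \emph{every} binary tree $H$ without needing the swap symmetry or restricting the comparison to trees carrying their own induced triplet sets, and it makes uniqueness transparent, since equality forces $\pi_H=\pi_{H_0}$ on every triple. The paper's route, in exchange, yields an explicit per-triple gap of $3D_0(i,j,k)$ between $H_0$ and any competitor, which is a stronger quantitative statement. One small point to tighten: your closing uniqueness claim tacitly uses the fact that a binary tree is determined up to isomorphism by its set of rooted triples; this is exactly the uniqueness statement the paper records just before the proposition (attributed to \citet{emamjomeh2018adaptive}), and you should invoke it explicitly rather than leaving it implicit.
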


\begin{proof}[Proof idea (details in appendix)]
One can show that if $\mathcal{T}_0$ is a set of triplets corresponding to hierarchy $H_0$ according to \eqref{eqn-T0-from-H0} and $(s_{ij})_{1\leq i,j\leq n}$ denote the AddS3 similarity derived from $\mathcal{T}_0$ (cf. Theorem \ref{thm_equivalence}), then $s_{ij} = 2n + 2 - 3 |H_0(i\vee j)|$, that is, one can recursively construct $H_0$ from the AddS3 similarities. 
We further use this representation to show that if $\mathcal{T}_0,\mathcal{T}_1$ are respectively derived from two hierarchies $H_0,H_1$ according to \eqref{eqn-T0-from-H0}, then there is a symmetry in the triplet revenue of the form $Trev(H_0,\mathcal{T}_1) = Trev(H_1,\mathcal{T}_0)$.

The above symmetry implies that proving $H_0$ uniquely maximises $Trev(H,\mathcal{T}_0)$ is equivalent to showing that $Trev(H_0,\mathcal{T}_0) > Trev(H_0,\mathcal{T}_1)$ for every triplet set $\mathcal{T}_1$ that correspond to a binary tree $H_1$ on $[n]$ that is not isomorphic to $H_0$. This last claim can be proved by showing that every $i,j,k$---for which the ordering of merger is changed between $H_0$ and $H_1$---has a positive contribution in $Trev(H_0,\mathcal{T}_0) - Trev(H_0,\mathcal{T}_1)$.
\end{proof}

\citet{emamjomeh2018adaptive} prove the uniqueness of the hierarchy that satisfies all the triplets in $\mathcal{T}_0$, that is, $H_0$ maximises the function
$f(H,\mathcal{T}_0) =\sum_{(i,j,k)\in\mathcal{T}_0} \mathbb{I}_{|H(i \lor k)| > |H(i \lor j)|}$.
While maximising $Trev(H,\mathcal{T}_0)$ seems to be a relaxation of maximising $f(H,\mathcal{T}_0)$ in this context, Proposition \ref{thm-H0-maxrev} shows that both problems have the same optimal solution $H_0$.

\subsection{Approximate Recovery of $H_0$ Using Passive Triplets}

We consider the setting, where $\mathcal{T}_0$ is not completely available but one has access to a uniformly sampled subset $\mathcal{T} \subseteq \mathcal{T}_0$.
We show that $|\mathcal{T}| = O(n^2 \log n / \epsilon^2)$ triplets suffice to obtain a tree $\widehat{H}$ such that $Trev(\widehat{H},\mathcal{T}_0) \geq (1-\epsilon)\cdot Trev(H_0,\mathcal{T}_0)$,
that is, we get a good approximation of $H_0$ 
with much fewer than $n^3$ samples, although we may not exactly recover $H_0$.
We consider the following uniform sampling to obtain $\mathcal{T}$. Let $p_n \in (0,1]$ denote a sampling probability, depending on $n$.
For every pair of triplets $(i,j,k),(j,i,k) \in \mathcal{T}_0$, we add the pair to $\mathcal{T}$ with probability $p_n$. 
We state the following approximation guarantee for trees derived using $\mathcal{T}$.

\begin{theorem}
\label{thm-approx}
For a triplet set $\mathcal{T}$ obtained from the above sampling procedure, consider the hierarchy 
\begin{displaymath}
\widehat{H} = \argmax_{H} ~Trev(H,\mathcal{T}).
\end{displaymath}
For any constants $\alpha>0$ and $0 <\epsilon<1/2$, if $n> 8/\epsilon$ and $p_n > 2^{12} \cdot (\alpha+2)\log n/n\epsilon^2$, then with probability at least $1-2n^{-\alpha}$,
\begin{equation}
0.1p_n n^3 \leq |\mathcal{T}| \leq 0.5p_n n^3 \qquad \text{and} \qquad
Trev(\widehat{H},\mathcal{T}_0) \geq (1-\epsilon)\cdot Trev(H_0,\mathcal{T}_0).
\label{eqn-approx}
\end{equation}
\end{theorem}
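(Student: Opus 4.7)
The plan is to combine a Chernoff bound for $|\mathcal{T}|$ with a uniform concentration argument for $Trev(H,\mathcal{T})$ over all binary trees on $[n]$, and then conclude by the optimality of $\widehat{H}$.

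For the size bound on $|\mathcal{T}|$, note that each pair $\{(i,j,k),(j,i,k)\}\subseteq\mathcal{T}_0$ is retained with probability $p_n$ independently, so $|\mathcal{T}|=2\sum_T X_T$ is twice a sum of $\binom{n}{3}$ independent $\mathrm{Bernoulli}(p_n)$ variables, with mean in $[p_n n^3/4,\,p_n n^3/3]$ for $n\geq 8$. A multiplicative Chernoff bound gives $0.1\,p_n n^3\leq|\mathcal{T}|\leq 0.5\,p_n n^3$ with probability at least $1-n^{-\alpha-1}$, using only $p_n\gtrsim \log n/n^3$, which is far weaker than the hypothesis.

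For the approximation bound, I would rewrite both revenues as sums of $\binom{n}{3}$ independent contributions. Grouping the two triplets of each pair in $\mathcal{T}_0$, for a triple $T=\{a,b,c\}$ with $\{a,b\}$ being the closest pair under $H_0$, set $e_T(H)=|H(a\vee c)|+|H(b\vee c)|-2|H(a\vee b)|$, so that
\begin{equation*}
Trev(H,\mathcal{T})=\sum_T X_T\,e_T(H),\qquad Trev(H,\mathcal{T}_0)=\sum_T e_T(H),
\end{equation*}
with $X_T\sim\mathrm{Bernoulli}(p_n)$ independent and $|e_T(H)|\leq 2n$. Hoeffding's inequality then yields $\Pr[|Trev(H,\mathcal{T})-p_n Trev(H,\mathcal{T}_0)|\geq s]\leq 2\exp(-s^2/(2n^2\binom{n}{3}))$. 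Union-bounding over all $(2n-3)!!\leq e^{n\log n}$ binary trees on $[n]$, the same bound holds simultaneously for every $H$ provided $s^2\gtrsim(\alpha+1)\,n^6\log n$. On this good event, applying the bound to both $H=H_0$ and $H=\widehat{H}$ and using the optimality $Trev(\widehat{H},\mathcal{T})\geq Trev(H_0,\mathcal{T})$ yields $Trev(\widehat{H},\mathcal{T}_0)\geq R^*-2s/p_n$ with $R^*:=Trev(H_0,\mathcal{T}_0)$, so the choice $s=\epsilon p_n R^*/2$ delivers the $(1-\epsilon)$-approximation.

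The main obstacle is to make $s=\epsilon p_n R^*/2$ compatible with $s^2\gtrsim n^6\log n$ under the hypothesis $p_n\gtrsim \log n/(n\epsilon^2)$; this requires the uniform lower bound $R^*\geq c_0\,n^4$ for every binary tree $H_0$ on $[n]$, with an absolute constant $c_0>0$. I would establish this by induction on $n$ using the root-split recursion
\begin{equation*}
R^*(H_0)=R^*(H_L)+R^*(H_R)+2b\bigl(n\tbinom{a}{2}-S_L\bigr)+2a\bigl(n\tbinom{b}{2}-S_R\bigr),
\end{equation*}
where $H_L,H_R$ are the root-subtrees on $a$ and $b=n-a$ leaves and $S_L,S_R$ are the sums of LCA sizes within each subtree. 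Combining the inductive hypothesis with the easy bounds $S_L\leq a^3/3$ and $S_R\leq b^3/3$ should close the induction with $c_0=1/12$, the asymptotic value attained by a caterpillar. Once this lower bound is in hand, the constant $2^{12}(\alpha+2)$ in the hypothesis leaves ample slack to carry Hoeffding and the union bound through, and the two parts of~\eqref{eqn-approx} hold simultaneously with probability at least $1-2n^{-\alpha}$.
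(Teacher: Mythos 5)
Your proposal is correct in substance but takes a genuinely different route from the paper in its key step, the uniform concentration over trees. The paper never union-bounds over the $e^{\Theta(n\log n)}$ binary trees: it uses the equivalence $Trev(H,\mathcal{T})=-\sum_{i<j}s_{ij}|H(i\vee j)|$ with the AddS3 similarities (Theorem~\ref{thm_equivalence}), applies Bernstein's inequality to each of the $\binom{n}{2}$ similarities $s_{ij}$ (a sum of $n-2$ independent terms with weights in $\{2,-1\}$), and then gets uniformity over all $H$ for free because $\sum_{i<j}|H(i\vee j)|=\tfrac13(n^3-n)$ is the same for every binary tree. This yields $\max_H|Trev(H,\mathcal{T})-p_nTrev(H,\mathcal{T}_0)|=O(n^3\sqrt{p_n n\log n})$. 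Your decomposition $Trev(H,\mathcal{T})=\sum_T X_T e_T(H)$ over $\binom{n}{3}$ independent triples, with Hoeffding plus a union bound over all $(2n-3)!!$ trees, is also valid and in fact yields a deviation $O(n^3\sqrt{n\log n})$ whose resulting requirement $p_n\gtrsim\sqrt{\log n}/(n\epsilon)$ is implied by (indeed weaker than) the stated hypothesis $p_n>2^{12}(\alpha+2)\log n/(n\epsilon^2)$, so the constants go through with room to spare. What the paper's route buys is avoiding the exponential union bound entirely and reusing the AddS3 machinery already established; what yours buys is a self-contained argument and, with Bernstein in place of Hoeffding on the triple decomposition, potentially an even sharper deviation bound. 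Your root-split recursion for $R^*$ is also correct (it is the triple-based view of the same quantity the paper lower-bounds via $\sum_N|N_1||N_2||N|^2\geq n^4/4$).

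One soft spot: the induction does \emph{not} close with the clean hypothesis $R^*(H)\geq n^4/12$. Checking the recursion at an unbalanced split (e.g.\ $a=1$, $b=n-1$) shows the cross terms fall short of $\tfrac{1}{12}\bigl(4a^3b+6a^2b^2+4ab^3\bigr)$ by a quantity of order $nab$. You must instead induct on a statement carrying a negative lower-order correction, such as $R^*\geq \tfrac{n^4}{12}-\tfrac23(n^3-n^2-n)$ (which is what the paper effectively proves), and then absorb the correction into the $(1-\epsilon)$ slack using the hypothesis $n>8/\epsilon$ --- this is precisely the role that hypothesis plays in the theorem, and your sketch does not currently connect it to anything.
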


\begin{proof}[Proof idea (details in appendix)]
We need to relate $Trev(H,\mathcal{T})$ with $Trev(H,\mathcal{T}_0)$ for every tree $H$. Since, the revenue function is linear with respect to the observed triplets, $\mathbb{E}[Trev(H,\mathcal{T})]= p_n\cdot Trev(H,\mathcal{T}_0)$, where the expectation is with respect random observation of each triplet pair in $\mathcal{T}_0$.
We use concentration inequalities to bound deviation from expectation, $\max_H \big| Trev(H,\mathcal{T}) - \mathbb{E}[Trev(H,\mathcal{T})]\big|$. Although there are exponentially many $H$, one can note that, due to Theorem \ref{thm_equivalence}, the deviation can be controlled by bounding the maximum deviation of the $\binom{n}{2}$ AddS3 similarities, $\max_{i<j} \big|s_{ij} - \mathbb{E}[s_{ij}] \big|$. For this we use, Bernstein inequality (for each $s_{ij}$) followed by union bound (over all $i,j$).
Thus, we show that for any constant $\alpha>0$, if $p_n > (\alpha+2)\log n/n$,  then
$\max_H \big| Trev(H, \mathcal{T}) - p_n Trev(H,\mathcal{T}_0) \big| = O\left(n^3 \sqrt{p_n n\log n}\right)$ with probability $1-n^{-\alpha}$.
Using concentration for both $H_0,\widehat{H}$, and noting that $Trev(\widehat{H},\mathcal{T}) \geq Trev(H_0,\mathcal{T})$, we have that
$Trev(\widehat{H},\mathcal{T}_0) \geq Trev(H_0,\mathcal{T}_0) - O\left(\sqrt{\frac{n^7\log n}{p_n}}\right)$.
For stated condition on $p_n$, the second term is at least $\frac{\epsilon n^4}{24}$.
Next, we show that $Trev(H_0,\mathcal{T}_0) \geq \frac{n^4}{12} - \frac{2(n^3-n^2-n)}{3}$, which is at least $(1-\epsilon)\frac{n^4}{12}$ for $n> 8/\epsilon$. This follows since AddS3 similarities computed from $\mathcal{T}_0$ are of the form $s_{ij} = 2n + 2 - 3 |H_0(i\vee j)|$, and hence, we can rewrite $Trev(H_0,\mathcal{T}_0)$ in terms sizes of internal nodes. The lower bound follows from inductive arguments.
Combining the lower bound with the deviation bound results in the theorem.
The claim $|\mathcal{T}| = \Theta(p_n n^3)$ with probability $1-n^{-\alpha}$ follows from $\mathbb{E}|\mathcal{T}| = p_n |\mathcal{T}_0| = \Theta(p_n n^3)$ and multiplicative Chernoff inequality.
\end{proof}

With $p_n$ fixed at the stated threshold, Theorem \ref{thm-approx} shows that, with probability $1-2n^{-\alpha}$, we can achieve $(1-\epsilon)$-approximation of triplet revenue using 
$|\mathcal{T}| = \Theta(n^2 \log n/\epsilon^2)$ triplets.
The result in \citet[][Proposition 2.2]{emamjomeh2018adaptive}---that $\Omega(n^3)$ triplets are necessary to exactly recover $H_0$---hinges on the fact that it is impossible to correctly guess the hierarchy at the lowest level of the tree $H_0$ using fewer comparisons.
Since errors in the lowest level do not significantly affect $Trev$, we can achieve the $(1-\epsilon)$-approximation in Theorem \ref{thm-approx}. However, note that $\widehat{H}$ may not be efficiently computable as it requires exhaustive search over all trees.
We discuss practical algorithms in the next section.

Theorem \ref{thm-approx} is stated in the noiseless setting, where it is assumed that every observed triplet in $\mathcal{T}$ is correct. It is natural to ask if Theorem \ref{thm-approx} still holds under a noisy setting, where some triplets may be flipped with some probability.
To formalise this, let $\mathcal{T} \subseteq \mathcal{T}_0$ be a set of triplets obtained from the  sampling procedure in Theorem \ref{thm-approx}. Let $\mathcal{T}'$ be constructed such that, for every $(i,j,k)\in \mathcal{T}$, $\mathcal{T}'$ contains $(i,j,k)$ with probability $1-\delta$, or $(i,k,j)$ with probability $\delta$.
The random flipping of labels is independent for all $(i,j,k) \in \mathcal{T}$. 
We obtain the following corollary from a minor modification of the above proof (details in appendix).

\begin{corollary}
\label{thm-approx-noisy}
For any fixed flipping probability $\delta\in(0,\frac12)$, and for any $\alpha>0$ and $0<\epsilon<1/2$, $\max\limits_{H}~ Trev(H,\mathcal{T}') \geq (1-\epsilon) \cdot Trev(H_0,\mathcal{T}_0)$ with probability $1-n^{-\alpha}$ if $n>8/\epsilon$ and $p_n > \frac{2^{12} \cdot (\alpha+2) \log n}{n\epsilon^2(1-2\delta)^2}$. 
\end{corollary}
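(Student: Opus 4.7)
The plan is to mimic the proof of Theorem \ref{thm-approx} with two modifications: (i) recompute the expectation of the AddS3 similarities derived from $\mathcal{T}'$, which now carries an extra factor $(1-2\delta)$, and (ii) check that Bernstein's inequality still yields the same concentration rate despite the flipping. The corollary then follows by replaying the optimality argument of Theorem \ref{thm-approx} with these adjusted quantities.

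The first and main step is to show that $\mathbb{E}[s^{AddS3}_{ij}(\mathcal{T}')] = p_n(1-2\delta)\cdot s^{AddS3}_{ij}(\mathcal{T}_0)$ for every pair $i<j$, which by Theorem \ref{thm_equivalence} immediately gives $\mathbb{E}[Trev(H,\mathcal{T}')] = p_n(1-2\delta)\cdot Trev(H,\mathcal{T}_0)$ for every binary tree $H$. To prove this, fix $i,j$ and a third object $k$. Exactly one of the three ordered pairs $\{(i,j,k),(j,i,k)\}$, $\{(i,k,j),(k,i,j)\}$, $\{(j,k,i),(k,j,i)\}$ lies in $\mathcal{T}_0$ (depending on which pair among $\{i,j,k\}$ is closest in $H_0$). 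Sampling places the whole pair into $\mathcal{T}$ with probability $p_n$ and the two triplets are then flipped independently. A short case analysis over the three subcases, tracking which of the four indicators $\mathbb{I}_{(i,j,k)\in\mathcal{T}'},\mathbb{I}_{(i,k,j)\in\mathcal{T}'},\mathbb{I}_{(j,i,k)\in\mathcal{T}'},\mathbb{I}_{(j,k,i)\in\mathcal{T}'}$ are non-zero, shows in each case that the contribution of the triple $\{i,j,k\}$ to $\mathbb{E}[s^{AddS3}_{ij}(\mathcal{T}')]$ is exactly $p_n(1-2\delta)$ times its contribution to $s^{AddS3}_{ij}(\mathcal{T}_0)$. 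Summing over $k$ yields the claim.

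Second, I would reuse the concentration argument of Theorem \ref{thm-approx}. Each $s^{AddS3}_{ij}(\mathcal{T}')$ is still a sum of $O(n)$ independent bounded indicators, each Bernoulli with parameter at most $p_n$, so Bernstein's inequality applied to $s^{AddS3}_{ij}(\mathcal{T}')$ gives a deviation of order $\sqrt{p_n n \log n}$ from its mean with probability $1-n^{-(\alpha+2)}$. A union bound over the $\binom{n}{2}$ pairs and the Dasgupta-cost identity of Theorem \ref{thm_equivalence} (together with the trivial bound $|H(i\vee j)|\le n$) then upgrade this to
\begin{equation*}
\max_H \bigl| Trev(H,\mathcal{T}') - p_n(1-2\delta)\,Trev(H,\mathcal{T}_0) \bigr| \;=\; O\bigl(n^{3}\sqrt{p_n n\log n}\bigr)
\end{equation*}
with probability at least $1-n^{-\alpha}$. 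Applying this deviation bound at both $H_0$ and $\widehat{H}:=\argmax_H Trev(H,\mathcal{T}')$, and using optimality $Trev(\widehat{H},\mathcal{T}')\ge Trev(H_0,\mathcal{T}')$, yields
\begin{equation*}
Trev(\widehat{H},\mathcal{T}_0) \;\ge\; Trev(H_0,\mathcal{T}_0) \;-\; O\!\left(\sqrt{\frac{n^{7}\log n}{p_n(1-2\delta)^{2}}}\right),
\end{equation*}
where the factor $(1-2\delta)^{-2}$ is the only substantive departure from the noiseless bound and comes from dividing through by the expectation scaling.

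Finally, I would close the argument exactly as in Theorem \ref{thm-approx}: the lower bound $Trev(H_0,\mathcal{T}_0)\ge (1-\epsilon/2)\cdot n^{4}/12$ (valid whenever $n>8/\epsilon$, and proven there using $s^{AddS3}_{ij}(\mathcal{T}_0)=2n+2-3|H_0(i\vee j)|$) combined with the stated choice $p_n>2^{12}(\alpha+2)\log n/(n\epsilon^{2}(1-2\delta)^{2})$ forces the deviation term below $\epsilon n^{4}/24$, so that $Trev(\widehat{H},\mathcal{T}_0)\ge (1-\epsilon)\,Trev(H_0,\mathcal{T}_0)$, which is the desired conclusion on the clean reference set $\mathcal{T}_0$. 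The main obstacle is the expectation computation in the first step: one must carefully verify, across all three subcases for the closest pair among $\{i,j,k\}$, that the additional flipping contributions combine into the uniform multiplicative factor $(1-2\delta)$; once this clean scaling is established, the variance bounds and the rest of the argument carry over verbatim from the noiseless proof.
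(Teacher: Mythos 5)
Your proposal is correct and follows essentially the same route as the paper's proof: computing $\mathbb{E}[s^{AddS3}_{ij}(\mathcal{T}')] = p_n(1-2\delta)\,s^{AddS3}_{ij}(\mathcal{T}_0)$, observing that the Bernstein/union-bound concentration is unaffected by the flipping, and replaying the optimality argument of Theorem \ref{thm-approx} with the extra $(1-2\delta)^{-1}$ factor absorbed by the strengthened condition on $p_n$. The paper merely packages your case analysis more compactly via the sign variables $(1-2\zeta^i_{jk})$ multiplying the indicator differences; the substance is identical.
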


\section{Comparison-based Algorithms for Hierarchical Clustering}
\label{sec:algorithms}

The equivalence between comparison-based revenues and Dasgupta's revenue, stated in Theorem~\ref{thm_equivalence}, implies that one may simply employ standard hierarchical clustering algorithms using the pairwise similarities AddS3 or AddS4, depending on whether one has access to triplets or quadruplets.
This makes it possible to use the well-established literature on hierarchical clustering with pairwise similarities.
In fact, as mentioned before, previous works on passive comparison-based hierarchical clustering also follow this philosophy using other kind of pairwise similarities obtained from the comparisons  \citep{kleindessner2017kernel,ghoshdastidar2019foundations}. 
Unlike previous works, our use of AddS3 or AddS4 stems from a revenue maximisation formulation that allows us to consider an approach based on the average linkage (AL) clustering algorithm, that is, the following procedure:

\begin{center}
\renewcommand{\arraystretch}{1.3}
\begin{tabular}{ll}
\hline
\multicolumn{2}{l}{\textbf{AddS3-AL (or AddS4-AL)}}
\\
\textbf{Given.} & A set of triplets $\mathcal{T}$ (or quadruplets $\mathcal{Q}$) on $[n]$
\\    
\textbf{Step 1.} & Compute the pairwise similarity function $s^{AddS3}$ (or $s^{AddS4}$) for every pair of objects 
\\    
\textbf{Step 2.} & Run average linkage algorithm with $s^{AddS3}$ (or $s^{AddS4}$)
\\
\textbf{Output.} & The tree or dendrogram $H$ on the $n$ objects
\\ \hline
\end{tabular}
\renewcommand{\arraystretch}{1}
\end{center}

\paragraph{Remark on approximation guarantee.} Average linkage enjoys theoretical guarantees under the assumption that the similarities are always positive.
\citet{moseley2017approximation} show that average linkage achieves a worst-case $\frac13$-approximation for revenue maximisation. Unfortunately, this result does not readily extend to AddS3-AL and AddS4-AL, as these similarities may be negative in some cases. A possible approach could be to add a positive constant to all the similarities to ensure that they are positive. Although this does not change the optimal tree or the one obtained from average linkage, a $\frac13$-approximation for the modified revenues (considering revised similarities) does not imply a $\frac13$-approximation for the original revenues. 

Based on the proof of \citet{moseley2017approximation}, one can show that AddS3-AL (or AddS4-AL) returns a tree with non-negative triplet (or quadruplet) comparison revenue. Whether approximation guarantees may also be derived for AddS3-AL and AddS4-AL remains open.

\section{Experiments}
\label{sec:experiments}

In this section, we propose two sets of experiments \footnote{The code is available at \url{https://github.com/jitaishik/Revenue_ComparisonHC.git}} 
to demonstrate the practical relevance of our new revenue function and the corresponding algorithm. In our first set of experiments, our goal is to show the usefulness of revenue maximisation as a solution to find hierarchies that are closer to the ground truth. We consider a planted model and demonstrate the alignment between AARI scores, a supervised metric of goodness for clustering, and our proposed revenue function. In our second set of experiments, we aim to show that the heuristic proposed in Section 6 to maximize the revenue performs well in practice. On real datasets, we compare our approach to two different state of the art approaches in comparison-based hierarchical clustering.

\subsection{Planted Model}
In this first set of experiments, we study the behaviour of the proposed revenues in a controlled setting. Hence, we generate data using a planted model for comparison-based hierarchical clustering \citep{ghoshdastidar2019foundations} and we use 3 triplets-based and 2 quadruplets-based methods to learn dendrograms.

\paragraph{Data.} To generate the data in this first set of experiments, we use a standard planted model in comparison-based hierarchical clustering~\citep{balakrishnan2011noise,ghoshdastidar2019foundations}. Given $n$ objects, we create a real similarity matrix $S = (s_{ij})_{1 \leq i,j \leq n}$ such that  $s_{ii} = 0$ and $s_{ij} = s_{ji}$ correspond to the similarity between objects $i$ and $j$. We assume that $(s_{ij})_{i<j}$ are independent Gaussians with $s_{ij} \sim \mathcal{N}(\mu _{ij},\sigma ^2)$. The choice of $\mu_{ij}$ defines a planted hierarchy, a complete binary tree of height $L$, built on top of $2^L$ ground clusters, that is sets of $n_0$ objects denoted as $\mathcal{C}_{1},\mathcal{C}_{2},...,\mathcal{C}_{2^L}$. The total number of points (leaves) in the complete hierarchy is thus $n = n_{0}2^L$. 
For every pair of objects $i,j$ that belong to the same ground cluster, $\mu_{ij} = \mu$---a constant. On the other hand, for objects $i,j$ from two distinct clusters, if $H(i\lor j)$ is rooted at level $\ell$, we define $\mu_{ij} = \mu -(L-\ell)\delta$. We observe that the tree is rooted at level-0 and the constants---separation $\delta$ and noise level $\sigma$---control the hardness of the problem. In particular, smaller values of $\delta$ make the similarities between examples that belong to the same cluster more difficult to distinguish from similarities between examples that belong to different clusters. The signal-to-noise ratio is thus $\frac{\delta}{\sigma}$. In all the experiments, we set $\mu = 0.8$, $\sigma = 0.1$, $n_{0}=30$, $L=3$ and we vary $\delta \in \{0.02,0.04,...,0.2\}$.
Since we are in a comparison-based setting, we do not directly use the similarities of the planted model to learn dendrograms but instead generate comparisons. Given $\mathcal{T}_{all}$ and $\mathcal{Q}_{all}$ the sets containing all possible triplets and quadruplets (see preliminaries), we obtain $\mathcal{T} \subseteq \mathcal{T}_{all}$ and $\mathcal{Q} \subseteq \mathcal{Q}_{all}$ by uniformly sampling $kn^2$ comparisons with $k>0$.

\paragraph{Evaluation Function.} To measure the closeness between the dendrograms obtained by the different approaches and the ground truth trees, we use the Averaged Adjusted Rand Index \citep{ghoshdastidar2019foundations}. The AARI is an extension to hierarchies of a well-known measure in standard clustering called Adjusted Rand Index~\citep[ARI; see][]{hubert1985comparing}. The underlying idea is to average the ARI obtained over the top $L$ levels of the tree. This measure takes values in $[0,1]$ with higher values for more similar hierarchies, an AARI of $1$ implying identical trees. Our goal is to empirically verify that the hierarchies with higher revenues are the ones closest to the ground truths as indicated by a higher AARI. Indeed, this would show that our revenue function is appropriate to evaluate the goodness of a dendrogram and that maximizing the revenue is indeed a good unsupervised way to select hierarchies. The results reported are averaged over $10$ independent trials.
\footnote{The randomness stems from three sources: the noise in the similarities, triplets selection, and the optimization procedure in tSTE. We fix the seeds to 0-9 in the 10 runs.} 
We defer the standard deviations to the appendix for the sake of readability.

\begin{figure}[t]
  \centering
  \begin{subfigure}[b]{0.45\linewidth}
    \includegraphics[width=\linewidth]{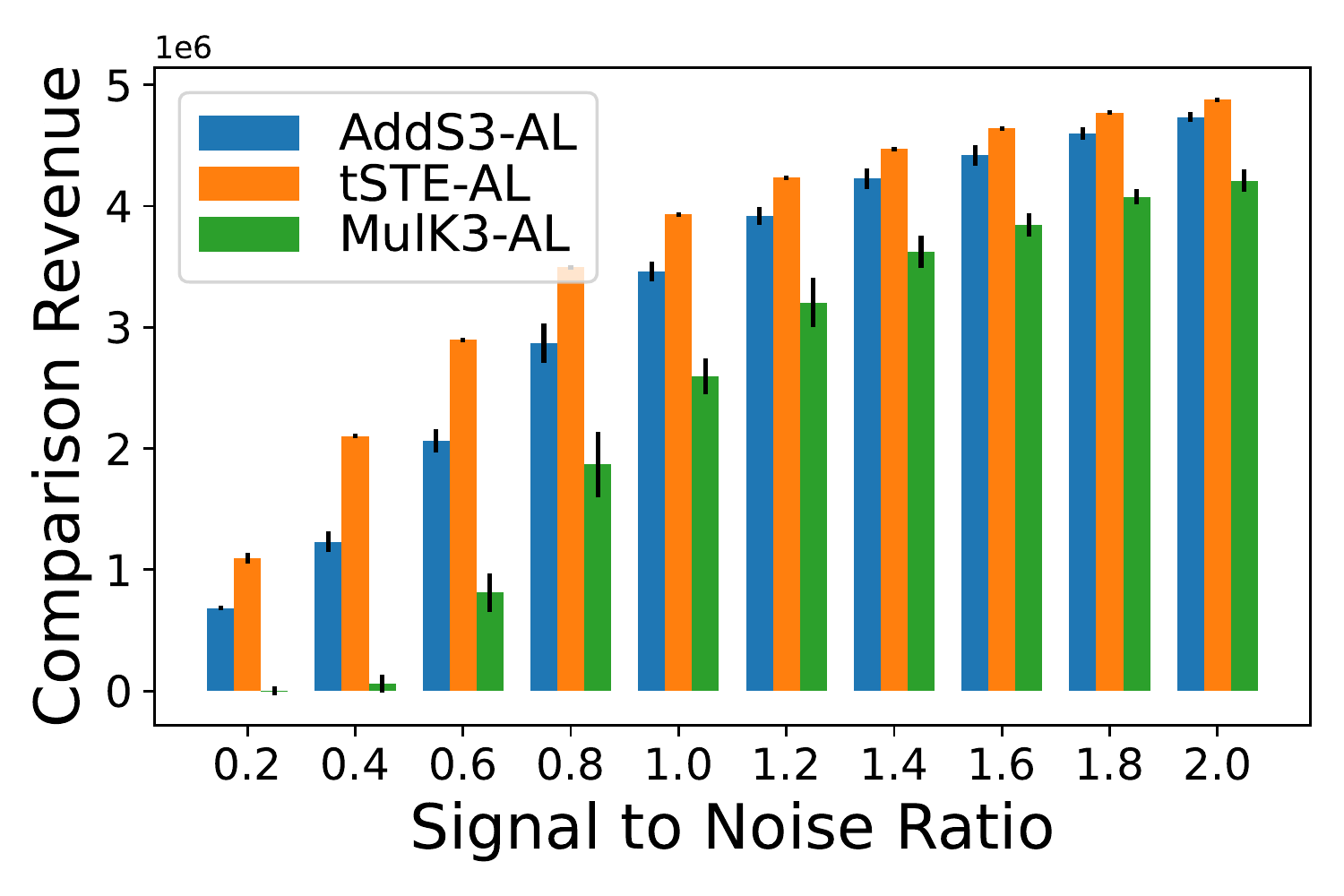}
  \end{subfigure}
  \begin{subfigure}[b]{0.45\linewidth}
    \includegraphics[width=\linewidth]{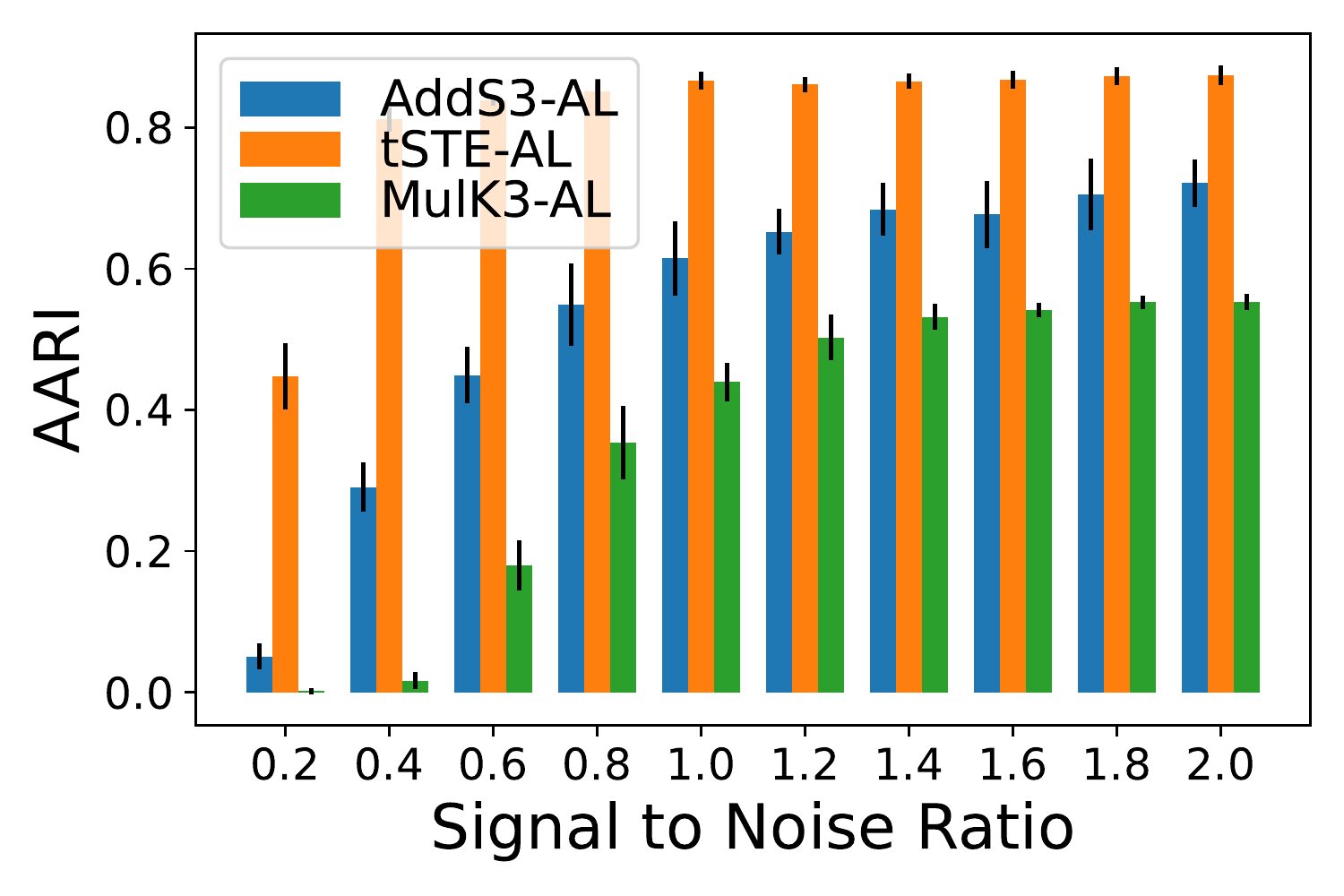}
  \end{subfigure}
  
  
  \caption{Revenue and AARI (higher is better) of several triplets-based methods using $n^{2}$ comparisons. Given various signal to noise ratios, a higher revenue implies higher AARI values (better dendrograms).}
  \label{fig:triplets_planted_revenue_aari}
\end{figure}

\paragraph{Methods.} 
We compare AddS3-AL and AddS4-AL, the two methods proposed in this work, to various comparison-based algorithms for learning dendrograms, such as 4K-AL \citep{ghoshdastidar2019foundations}, a quadruplets-based method, along with two triplets-based approaches MulK3-AL \citep{kleindessner2017kernel,perrot2020near} and tSTE-AL \citep{van2012stochastic}. The former two are similarity-based approaches where the idea is use the comparisons to learn a similarity. The latter is an ordinal embedding approach where the idea is to recover a representation of the data that respects the comparisons as well as possible, and then use the cosine similarity 
$s_{ij} = \frac{\langle x_i,x_j\rangle}{||x_i||_2 ||x_j||_2}$
to compare the examples. To learn the dendrograms we then apply standard average linkage to the various similarities.

\begin{table*}
\centering
\caption{Revenue and AARI of various methods for a fixed signal to noise ratio $\frac\delta\sigma =1.5$ and varying number of triplets (decreased by factor of 2). The planted setting consists of a total of $n=240$ objects. In each line the highest revenue and AARI are underlined, taking into account standard deviation (see appendix). This shows that the two measures are well aligned.}
\begin{tabular}{@{}clrllrllrl@{}}
\toprule
\multirow{2}{*}{\begin{tabular}[c]{@{}c@{}}Number of \\triplets\end{tabular}} &
  \multirow{2}{*}{} &
  \multicolumn{2}{c}{AddS3-AL} &
   &
  \multicolumn{2}{c}{tSTE-AL} &
   &
  \multicolumn{2}{c}{MulK3-AL} \\ \cmidrule(lr){3-4} \cmidrule(lr){6-7} \cmidrule(l){9-10} 
 &
   &
  \multicolumn{1}{c}{Revenue} &
  \multicolumn{1}{c}{AARI} &
  \multicolumn{1}{c}{} &
  \multicolumn{1}{c}{Revenue} &
  \multicolumn{1}{c}{AARI} &
  \multicolumn{1}{c}{} &
  \multicolumn{1}{c}{Revenue} &
  \multicolumn{1}{c}{AARI} \\ \midrule
$16n^2$  &  & $\underline{7.347 \times 10^7}$           & $\underline{0.937}$          
&  & $7.300\times 10^7$  & $0.877$  
&  & $7.315\times 10^7$  & $0.861$ \\
$8n^2$  &  & $\underline{3.667\times 10^7}$           & $\underline{0.905}$          
&  & $3.656\times 10^7$  & $0.877$ 
&  & $3.636\times 10^6$  & $0.855$ \\
$4n^2$  &  & $\underline{1.823\times 10^7}$         & $\underline{0.862}$          
&  & \underline{$1.825\times 10^7$} & \underline{$0.874$} 
&  & $1.795\times 10^7$ & $0.830$ \\
$2n^2$  &  & $8.962\times 10^6$          & $0.782$          &  & $\underline{9.130\times 10^6}$ & $\underline{0.867}$ 
&  & $8.444\times 10^6$ & $0.677$ \\
$n^2$  &  & $4.315\times 10^6$          & $0.682$          &  & $\underline{4.559\times 10^6}$ & $\underline{0.868}$ 
&  & $3.728\times 10^6$ & $0.540$ \\
$n^2/2$  &  & $2.038\times 10^6$ & $0.593$ 
&  & $\underline{2.277\times 10^6}$   & $\underline{0.860}$       
&  & $1.220\times 10^6$ & $0.347$ \\
$n^2/4$  &  & $9.268\times 10^5$ & $0.498$ 
&  & $\underline{1.137\times 10^6}$          & $\underline{0.851}$          
&  & $1.531\times 10^5$ & $0.077$ \\
$n^2/8$  &  & $4.261\times 10^5$ & $0.396$ 
&  & $\underline{5.728\times 10^5}$  & $\underline{0.840}$          
&  & $1.856\times 10^4$ & $0.011$  \\
$n^2/16$  &  & $2.015\times 10^5$ & $0.295$  
&  & $\underline{2.858\times 10^5}$      &$\underline{ 0.720}$          
&  & $4.026\times 10^3$ & $0.005$ \\
$n^2/32$ &  & $1.096\times 10^5$ & $0.192$ 
&  & $\underline{1.450\times 10^5}$       & $\underline{0.549}$          
&  & $2.015\times 10^3$ & $0.003$ \\ \bottomrule
\end{tabular}%

\label{tab:triplets_planted_revenue_aari}
\end{table*}

\begin{figure}[b]
  \centering
  \begin{subfigure}[b]{0.3\linewidth}
    \includegraphics[width=\linewidth]{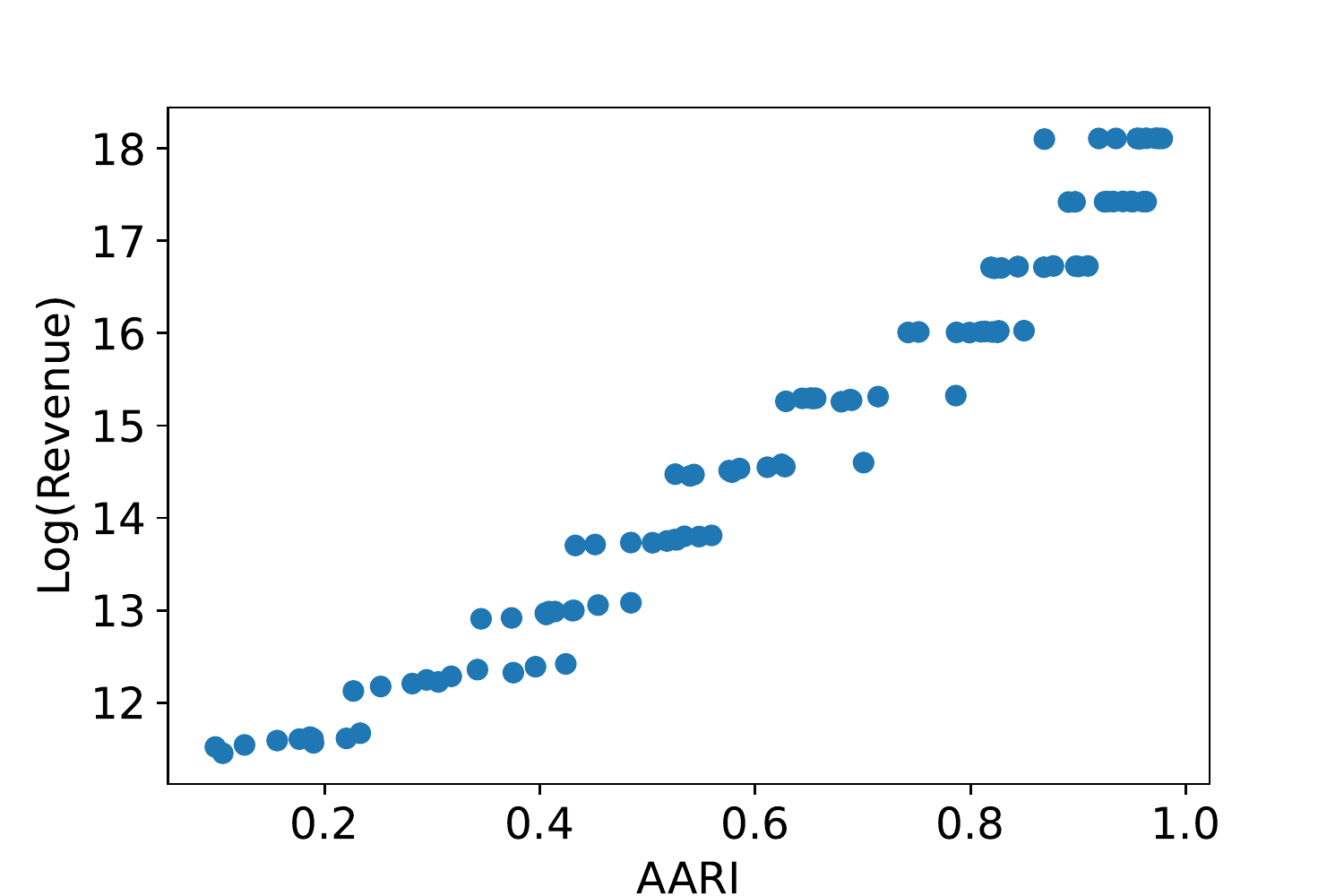}
    \caption{\centering \textbf{AddS3-AL} ($\tau=0.933,~\rho=0.992$)}
  \end{subfigure}
  \begin{subfigure}[b]{0.3\linewidth}
    \includegraphics[width=\linewidth]{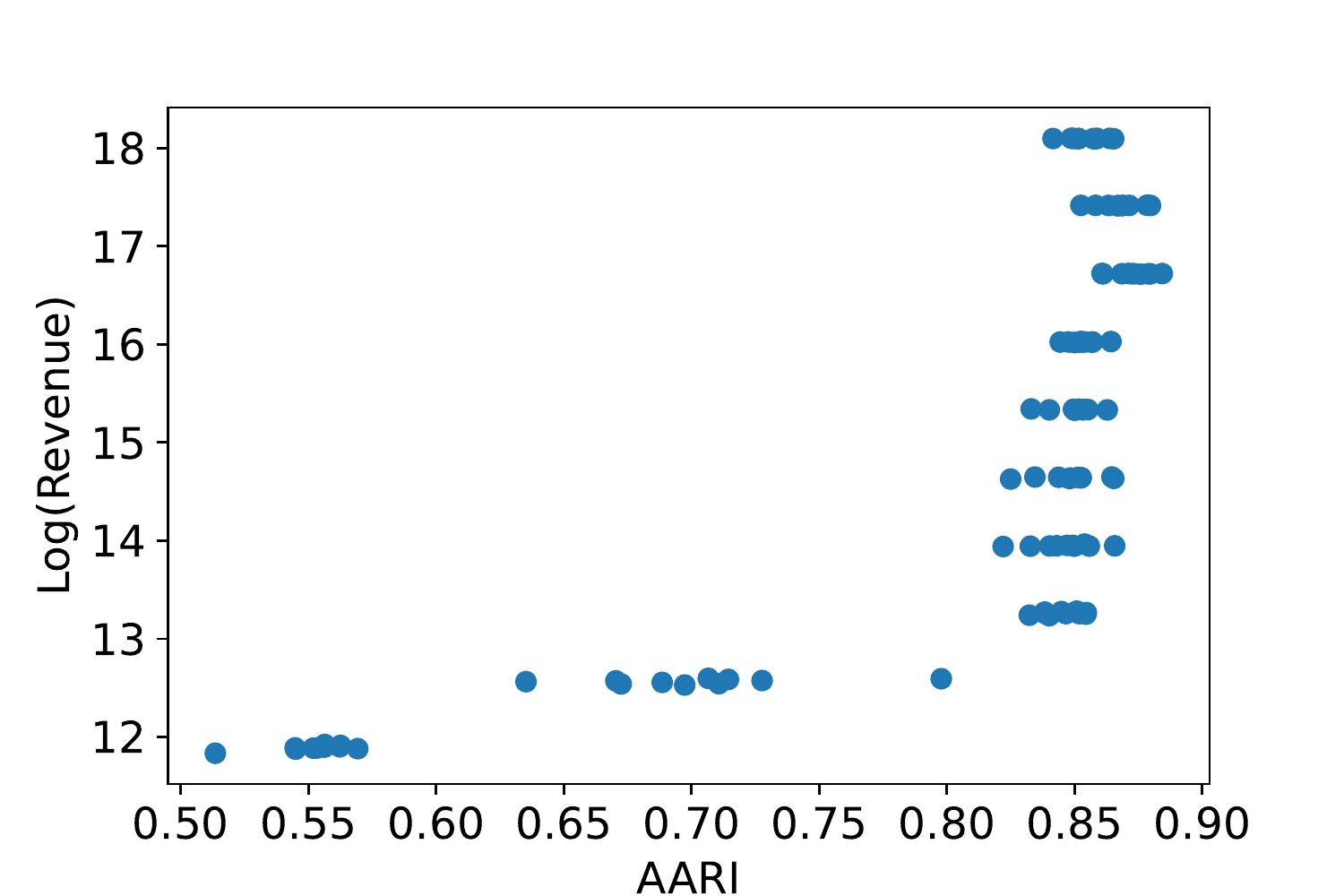}
    \caption{\centering \textbf{tSTE-AL} ($\tau=0.583,~\rho=0.764$)}
  \end{subfigure}
  \begin{subfigure}[b]{0.3\linewidth}
    \includegraphics[width=\linewidth]{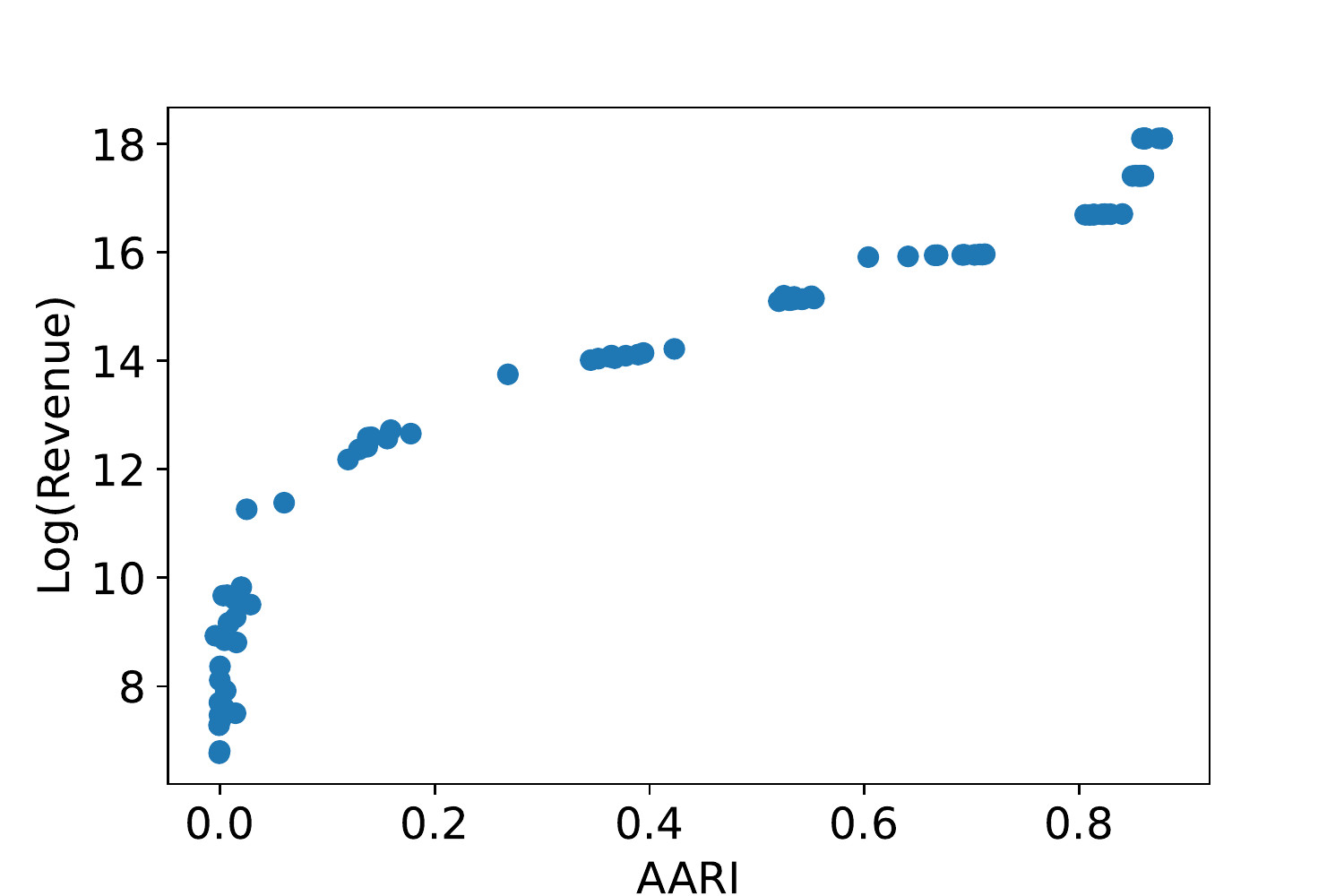}
    \caption{\centering \textbf{MulK3-AL}  ($\tau=0.926,~\rho=0.988$)}
  \end{subfigure}
  \caption{Scatter plots for the AARI and triplet revenue from different runs and varying number of triplets in the setting of Table \ref{tab:triplets_planted_revenue_aari}. The three methods (AddS3, tSTE and MulK3) show different trends. However, in each case, the triplet revenue and AARI have high Kendall-$\tau$ and Spearman-$\rho$ correlation.}
  \label{fig:corr_revenue_aari}
\end{figure}

\paragraph{Results.} In Figure~\ref{fig:triplets_planted_revenue_aari}, we present the AARI and Revenue of different triplet-based methods for several signal to noise ratios using $n^2$ comparisons.\footnote{Note that we also considered other amounts of comparisons. However, the trends were similar to the ones observed here and thus we chose to defer these results to the appendix.} We observe that, given a set signal to noise ratio, the ordering between the methods remains the same for the revenue and the AARI, that is the method with the highest revenue is also the one with the highest AARI. In other words, a higher revenue indicates that the corresponding dendrogram is better. In Table~\ref{tab:triplets_planted_revenue_aari}, we verify that this remains true for a constant signal to noise ratio of $1.5$ and various number of observed comparisons. 
In particular, we notice that when the revenue of AddS3-AL becomes higher than the revenue of tSTE-AL, that is using more than $4n^2$ triplets, the AARI also follows the same trend, thus confirming that selecting the dendrogram with the highest revenue is indeed a good way to select meaningful hierarchies. In the appendix, we show that the same behaviour can be observed for various signal to noise ratios as well as in the quadruplet case.

We further investigate the dependence between AARI and the triplet revenue in Figure \ref{fig:corr_revenue_aari}, where we plot the AARI and the corresponding triplet revenue (in log scale) for different runs and number of triplets, considered in Table \ref{tab:triplets_planted_revenue_aari}.
Although the variation of the revenue for increasing AARI seems to depend on the method under consideration, all plots show a monotonic trend. To validate this we use two measures of rank correlation---Kendall's-$\tau$ and Spearman's-$\rho$---that capture how well the dependence is captured as monotonic function.
For both AddS3-AL and MulK3-AL, the rank correlations are greater than 0.9, indicating a highly monotonic trend. Although the rank correlations are smaller for tSTE-AL, but still large enough and corresponding $p$-value is about $10^{-17}$, indicating rank correlation.


\begin{table*}[t]
\centering
\caption{Experiments on real datasets. For the triplets-based methods, AddS3-AL tends to obtain the dendrograms with the best revenues. For the quadruplets-based approaches, AddS4-AL and 4K-AL obtain comparable results. Using the original Cosine similarities only yields slightly better hierarchies than the comparison-based methods. For first 3 datasets, multiple runs are used and standard deviation (see appendix) is considered for highlighting the best method(s).}
\begin{tabular}{@{}l@{}llllll@{}rrr@{}}
\toprule
\multirow{2}{*}{Dataset} &  & \multicolumn{4}{c}{Triplet}                                    &  & \multicolumn{3}{c}{Quadruplet}                     \\ \cmidrule(lr){3-6} \cmidrule(l){8-10} 
 &
   &
  \multicolumn{1}{c}{AddS3-AL} &
  \multicolumn{1}{c}{tSTE-AL} &
  \multicolumn{1}{c}{MulK3-AL} &
  \multicolumn{1}{c}{Cosine-AL} &
   &
  \multicolumn{1}{c}{AddS4-AL} &
  \multicolumn{1}{c}{4K-AL} &
  \multicolumn{1}{c}{Cosine-AL} \\ \midrule
Zoo                      &  & 
$\underline{2.771\text{$\times$}10^5}$           & 
$2.164\text{$\times$}10^5$  & 
$2.041\text{$\times$}10^5$  & 
$\underline{2.824\text{$\times$}10^5}$ &  & 
$2.828\text{$\times$}10^5$  & 
$2.866\text{$\times$}10^5$          & 
$\underline{2.945\text{$\times$}10^5}$  \\
Glass                    &  & 
$\underline{2.161\text{$\times$}10^6}$ & 
$1.973\text{$\times$}10^6$ & 
$1.412\text{$\times$}10^6$ & 
$\underline{2.110\text{$\times$}10^6}$         &  & 
$2.429\text{$\times$}10^6$ & 
$2.425\text{$\times$}10^6$         & $\underline{2.494\text{$\times$}10^6}$ \\
MNIST                    &  & $1.893 \text{$\times$}10^9$                  & $\underline{2.061\text{$\times$}10^9}$         & $1.724\text{$\times$}10^9$         & \underline{$2.064\text{$\times$}10^9$}                 &  & $1.905\text{$\times$}10^9$         & $1.884\text{$\times$}10^9$                 & $\underline{2.075\text{$\times$}10^9}$                  \\
Car                      &  & $1.521\text{$\times$}10^5$  & $\underline{1.562\text{$\times$}10^5}$  & $1.264\text{$\times$}10^5$  & -                 &  & $\underline{1.521\text{$\times$}10^5}$  & $1.125\text{$\times$}10^5$ & - \\

Food                      &  & $\underline{6.137\text{$\times$}10^6}$  & $5.993\text{$\times$}10^6$  & $6.096\text{$\times$}10^6$  & -                 &  & $\underline{6.137\text{$\times$}10^6}$  & $\underline{6.137\text{$\times$}10^6}$ & -\\

Vogue                      &  & $\underline{2.722\text{$\times$}10^4}$  & $2.104\text{$\times$}10^4$  & $3.022\text{$\times$}10^3$  & -                 &  & $\underline{2.722\text{$\times$}10^4}$  & $2.549\text{$\times$}10^4$ & -\\

Nature                      &  & $\underline{2.650\text{$\times$}10^5}$  & $2.056\text{$\times$}10^5$  & $1.231\text{$\times$}10^5$  & -                 &  & $\underline{2.650\text{$\times$}10^5}$  & $2.228\text{$\times$}10^5$ & -\\

Imagenet                      &  & $\underline{7.179\text{$\times$}10^7}$  & $6.571\text{$\times$}10^7$  & $3.440\text{$\times$}10^7$  & -                 &  & $\underline{7.179\text{$\times$}10^7}$  & $6.994\text{$\times$}10^7$ & -
\\ \bottomrule
\end{tabular}%

\label{tab:comparisons_real}
\end{table*}

\subsection{Real Data}
The previous experiments establish that our revenue functions are good at identifying meaningful dendrograms in an unsupervised way. In the following experiments, we investigate the behaviour of the proposed approaches on real data. In particular, we show that they are competitive with standard comparison-based hierarchical clustering approaches on various datasets.

\begin{wraptable}{r}{0.57\textwidth}
\caption{Description of datasets used in the experiments.}
\label{tab:datasets}
\begin{tabular}{@{}llrrr@{}}
\toprule
Dataset  & Query                & \#Objects & \#Triplets \\ \midrule
Zoo      & Cosine Similarity    & 100       & 100000      \\
Glass    & Cosine Similarity    & 214       & 45796      \\
MNIST    & Cosine Similarity    & 2000      & 4000000    \\ \midrule
Car      & Most Central Triplet & 60        & 14194      \\
Food     & Standard Triplet     & 100       & 190376     \\
Vogue    & Odd-out Triplet      & 60        & 2214       \\
Nature   & Odd-out Triplet      & 120       & 6710       \\
Imagenet & Rank 2 from 8          & 1000      & 328549     \\ \bottomrule
\end{tabular}%
\end{wraptable}
\paragraph{Data.} We consider 8 different datasets. On the one hand, we consider 3 standard clustering datasets: Zoo, Glass, and MNIST \citep{heller2005bayesian,lecun2010mnist,vikram2016interactive}. The Zoo dataset originally consisted of 101 animals each with 16 features. But we choose to remove the entry with class "girl" since we do not feel it belongs to the zoo dataset. The Glass dataset has 9 features for 214 examples.
For the MNIST dataset we consider two subsets of the MNIST test dataset that originally contains 10000 examples distributed among the ten digits. A 2-dimensional embedding of the entire MNIST test data was constructed with t-SNE \citep{JMLR:v15:vandermaaten14a}. From this we randomly sampled 200 examples for each digit to form a dataset of 2000 entries and  normalized the embeddings so that each example lies in $\left[-1,1\right]$. Since we are in a comparison-based setting, we generate $n^2$ comparisons using the cosine similarity. To model mistakes from human annotators, we randomly and uniformly flip $5\%$ of the comparisons~\citep{emamjomeh2018adaptive}, where by flipping $(i,j,k)$ we mean replacing it with $(i,k,j)$. On the other hand, we consider 5 comparison-based datasets, Car, Food, Vogue Cover, Nature Scene and ImageNet Images v0.1, from the cblearn repository.\footnote{\url{https://github.com/dekuenstle/cblearn}}
The number of objects and the kind of query used to obtain comparisons are summarized in Table~\ref{tab:datasets}.
The comparisons are transformed into triplets (final number of triplets noted in Table~\ref{tab:datasets}), which are also used in the quadruplet setting.  
In Table \ref{tab:datasets}, a central triplet is a query of the form---which of the three objects (i,j,k) is most central. Provided that the answer is i, this implies object i is more similar to both j and k than they are to each other. Two standard triplets (j,i,k) and (k,i,j) are thus obtained. An Odd-out triplet is a query of the form---which of the three objects (i,j,k) is the odd one out. If i is picked as the odd one, it gives two standard triplets of the form (j,k,i) and (k,j,i). A rank 2 from 8 query is of the form---among 8 objects $(i_0,\ldots,i_7)$, rank the 2 that appear to be most similar to the reference object $i_0$. If $i_1$ and $i_2$ are ranked as the most similar to $i_0$ in this order, then $11$ standard triplets of the form $(i_0,i_1,i_k)_{k=2}^7$ and $(i_0,i_2,i_k)_{k=3}^7$ are obtained.

\paragraph{Evaluation Function.} Since the datasets considered here do not come with a ground truth hierarchy, we cannot compute the AARI. Hence, we only report the revenue. The results reported are averaged over $10$ independent trials\footnote{The randomness stems from two main sources: the triplets generation (in the Zoo, Glass, and MNIST dataset), the optimization procedure in tSTE (initialization, batch selection). We fix the seeds to 0-9 for the 10 runs. For all the other datasets and methods, every step is deterministic and, thus, we only need to report the results of a single run.} and defer the standard deviations to the appendix.

\paragraph{Methods.} Besides the methods already used in the planted setting, we also consider the Cosine baseline where it is assumed that the pairwise cosine similarities are available, and we apply average linkage directly on the similarities used to generate the comparisons. This baseline is not applicable to the comparison-based datasets where we only have access to the comparisons and not to the similarities.

\paragraph{Results.} The results are reported in Table~\ref{tab:comparisons_real}. We can notice that AddS3-AL tends to be better than tSTE-AL and MulK3-AL while AddS4-AL and 4K-AL are comparable. As is expected, the Cosine baseline based on the original similarities obtains the best performances in most cases, but it only seems to yield slightly better hierarchies than the comparison-based methods. This would tend to confirm that hierarchical clustering with average linkage is indeed a problem that can be solved using only a limited number of comparisons, instead of using all similarities.

\section{Conclusion}
\label{conclusion}
In this paper, we proposed novel revenue functions that allow us to measure the goodness of a dendrogram in an unsupervised way using only triplet or quadruplet comparisons. This suggest natural algorithms for hierarchical clustering based on the maximization of such revenues. Drawing theoretical connections with existing work on cost and revenue functions in standard hierarchical clustering, we propose two algorithms based on average linkage for hierarchical clustering using only comparisons. We empirically show that our revenue functions successfully identify the dendrograms that are closest to the ground truth. We also show that the proposed approaches to learn hierarchies perform well on real datasets and are competitive with state of the art methods.

We further used the proposed revenue function to resolve an open theoretical problem of recovering a latent hierarchy using fewer than $\Omega(n^3)$ passive triplets.
We showed that $O(n^2 \log n / \epsilon^2)$ passive triplets suffice to obtain a $(1-\epsilon)$-approximation of the optimal triplet revenue. 
We conclude with the following open questions: 
(i) Are $\Omega(n^2 \log n)$ passive triplets necessary for a $(1-\epsilon)$-approximation? 
\\(ii) At this point, we are unable to obtain a polynomial-time approximation scheme (PTAS) for the revenue maximisation problem, but we believe this should be possible. It may also be possible to obtain more efficient algorithms that are linear in time with respect to the number of triplets. A linear-time method exists for maximum quartet consistency \citep{SnirY11}.

\subsection*{Acknowledgments}
The work of A. Mandal was supported by the German Academic Exchange Service (DAAD) through the Working Internships in Science and Engineering (WISE) scholarship. The work of D. Ghoshdastidar is partly supported by the German Research Foundation through the DFG-ANR PRCI ``ASCAI'' (GH 257/3-1). 

\bibliographystyle{abbrvnat}
\bibliography{tmlr}

\appendix

\section{Proof of Theorem \ref{thm_equivalence}}

Recall the formulation of $Trev$ for a tree $H$ and a set of triplets $\mathcal{T}$:
\begin{align*}
    Trev(H,\mathcal{T})
    &= \sum_{i,j,k} \mathbb{I}_{(i,j,k) \in \mathcal{T}}\Big(|H(i \lor k)| - |H(i \lor j)|\Big)
    \tag{sum over all distinct $i,j,k$.} \\
    &= \sum_{i,j,k} \mathbb{I}_{(i,j,k) \in \mathcal{T}}|H(i \lor k)| - \sum_{i,j,k} \mathbb{I}_{(i,j,k) \in \mathcal{T}}|H(i \lor j)| \\
    &= \sum_{i,j,k} \mathbb{I}_{(i,k,j) \in \mathcal{T}}|H(i \lor j)| - \sum_{i,j,k} \mathbb{I}_{(i,j,k) \in \mathcal{T}}|H(i \lor j)| \\
    \tag{by change of variable between $j$ and $k$ in the first term.} \\
    &= \sum_{i\neq j} \Big(\sum_{k\neq i,j} \mathbb{I}_{(i,k,j) \in \mathcal{T}} - \mathbb{I}_{(i,j,k) \in \mathcal{T}}\Big)|H(i \lor j)| \\
    &= \sum_{i<j} |H(i \lor j)| \times \Big(\sum_{k\neq i,j} \mathbb{I}_{(i,k,j) \in \mathcal{T}} - \mathbb{I}_{(i,j,k) \in \mathcal{T}} + \mathbb{I}_{(j,k,i) \in \mathcal{T}} - \mathbb{I}_{(j,i,k) \in \mathcal{T}}\Big) \\
    \tag{by change of variable between $i$ and $j$ when $i>j$.}\\
    &= \sum_{i<j} -s^{AddS3}_{ij} |H(i \lor j)|
\end{align*}
by definition of $s^{AddS3}$. This concludes the proof for the triplets-based revenue.

Using a similar approach, recall the formulation of $Qrev$ for a tree $H$ and quadruplet set $\mathcal{Q}$:
\begin{align*}
Qrev(H,\mathcal{Q})
&=\sum_{i,j,k,l} \mathbb{I}_{(i,j,k,l)\in\mathcal{Q}}\Big(|H(k \lor l)| - |H(i \lor j)|\Big) \\
\tag{sum over all $i< j,k< l, (i,j)\neq (k,l)$.}
&=\sum_{i,j,k,l} \mathbb{I}_{(i,j,k,l)\in\mathcal{Q}}|H(k \lor l)| - \sum_{i,j,k,l} \mathbb{I}_{(i,j,k,l)\in\mathcal{Q}}|H(i \lor j)| \\
&=\sum_{i,j,k,l} \mathbb{I}_{(k,l,i,j)\in\mathcal{Q}}|H(i \lor j)| - \sum_{i,j,k,l} \mathbb{I}_{(i,j,k,l)\in\mathcal{Q}}|H(i \lor j)|\\
\tag{by swapping the role of $i,j$ and $k,l$ in the first term.} \\
&=\sum_{i< j} \Big(\sum_{\substack{k< l \\(k,l)\neq (i,j)}} \mathbb{I}_{(k,l,i,j)\in\mathcal{Q}} - \mathbb{I}_{(i,j,k,l)\in\mathcal{Q}} \Big)|H(i \lor j)|\\
&=\sum_{i< j} -s^{AddS4}_{ij}|H(i \lor j)|
\end{align*}
by definition of $s^{AddS4}$. 
This concludes the proof for the quadruplets-based revenue.

\section{Proof of Proposition \ref{thm-H0-maxrev}}

The proposition follows immediately from the following two lemmas.

\begin{lemma}
\label{lem-T0-maximiser}
Let $H_0$ be a binary tree on $[n]$ and $\mathcal{T}_0$ be the set of triplets induced by $H_0$. Then
\begin{align}
\mathcal{T}_0 = \argmax_{\mathcal{T}} Trev(H_0,\mathcal{T}), \label{app:eq:optimaltriplets}
\end{align}
where the maximum is unique over all triplet sets that are induced by some binary tree on $[n]$.
\end{lemma}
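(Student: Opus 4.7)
The plan is to prove Lemma \ref{lem-T0-maximiser} by decomposing $Trev(H_0,\mathcal{T})$ as a sum over unordered triples $\{i,j,k\}$ of distinct objects and arguing termwise. First I would record the key combinatorial fact: for any binary tree $H$ on $[n]$ and any three distinct objects $i,j,k$, exactly one of the meet-sizes $|H(i\vee j)|$, $|H(i\vee k)|$, $|H(j\vee k)|$ is strictly smaller than the other two, since the two larger meets both coincide with $H(i\vee j\vee k)$. Consequently, for a triplet set $\mathcal{T}$ induced by a binary tree via \eqref{eqn-T0-from-H0}, each unordered triple $\{i,j,k\}$ contributes exactly two triplets of the form $(a,b,c),(b,a,c)$, where $\{a,b\}$ is the uniquely closest pair under the underlying tree and $c$ is the remaining object.

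Given this structure, I would compute the per-triple contribution to $Trev(H_0,\mathcal{T})$ explicitly. For $\mathcal{T}_0$ itself, let $\{x,y\}$ be the closest pair in $H_0$ on $\{i,j,k\}$ with $z$ the third object, and write $T := H_0(x\vee z) = H_0(y\vee z)$; summing the two triplets $(x,y,z),(y,x,z) \in \mathcal{T}_0$ yields contribution $2|T| - 2|H_0(x\vee y)|$. For a deviating $\mathcal{T}$ that selects a different closest pair on this triple, say $\{x,z\}$ (the case $\{y,z\}$ is symmetric), the two triplets $(x,z,y),(z,x,y)$ contribute $(|H_0(x\vee y)| - |T|) + (|T| - |T|) = |H_0(x\vee y)| - |T|$.

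The per-triple gap between $\mathcal{T}_0$ and any deviation is therefore $3(|T| - |H_0(x\vee y)|)$, which is strictly positive because $z \in T$ while $z \notin H_0(x\vee y)$, so $H_0(x\vee y)$ is a proper subset of $T$. Summing across all unordered triples, $Trev(H_0,\mathcal{T}_0) - Trev(H_0,\mathcal{T})$ equals three times a sum of strictly positive quantities over precisely those triples on which $\mathcal{T}$ and $\mathcal{T}_0$ disagree, giving strict inequality whenever $\mathcal{T} \neq \mathcal{T}_0$ and hence unique maximality. The only mildly delicate step is the combinatorial fact about binary-tree meets in the first paragraph; the remainder is straightforward algebraic bookkeeping, and I do not foresee a substantial obstacle.
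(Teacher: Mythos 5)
Your proposal is correct and follows essentially the same route as the paper's proof: both reduce to a per-triple comparison in which the two triplets chosen by $\mathcal{T}_0$ contribute $2\bigl(|H_0(i\vee k)|-|H_0(i\vee j)|\bigr)$ while any deviating choice loses exactly $3\bigl(|H_0(i\vee k)|-|H_0(i\vee j)|\bigr)>0$, and strict positivity summed over the disagreeing triples gives uniqueness. The only cosmetic difference is that you organise the sum over unordered triples and invoke the coincidence of the two larger meets explicitly, whereas the paper sums over the differing triplet pairs and obtains the same factor of $3$ by direct cancellation.
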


\begin{lemma}
\label{lem-HT-swap-equality}
Let $H_0, H_1$ be two binary trees on $[n]$ and $\mathcal{T}_0, \mathcal{T}_1$ be the set of triplets induced by $H_0$ and $H_1$, respectively. Then
$Trev(H_0,\mathcal{T}_1) = Trev(H_1,\mathcal{T}_0)$.
\end{lemma}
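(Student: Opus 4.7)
My plan is to leverage the closed-form expression for the AddS3 similarity induced by a hierarchy, which is highlighted in the proof sketch of Proposition~\ref{thm-H0-maxrev}: if $\mathcal{T}$ is the triplet set derived from a binary tree $H$ on $[n]$ via \eqref{eqn-T0-from-H0}, then
\begin{equation*}
s^{AddS3}_{ij}(\mathcal{T}) = 2n+2 - 3|H(i\vee j)|.
\end{equation*}
First I would verify this identity by a direct count: for every $k\neq i,j$, exactly one of the three pairs in $\{i,j,k\}$ is the first pair to merge in $H$, and among the four indicator terms in the definition of $s^{AddS3}_{ij}$, the pair $\{i,j\}$ contributes $+2$ to $s^{AddS3}_{ij}$ when it merges first (equivalently, $k\notin H(i\vee j)$) and $-1$ otherwise. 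Since there are $n-|H(i\vee j)|$ leaves outside $H(i\vee j)$ and $|H(i\vee j)|-2$ non-$\{i,j\}$ leaves inside, combining the contributions yields $2(n-|H(i\vee j)|) - (|H(i\vee j)|-2) = 2n+2-3|H(i\vee j)|$.

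Next, by Theorem~\ref{thm_equivalence}, I can write $Trev(H_0,\mathcal{T}_1) = -\sum_{i<j} s^{AddS3}_{ij}(\mathcal{T}_1)\,|H_0(i\vee j)|$. Substituting the closed form for the similarities derived from $H_1$ and expanding gives
\begin{equation*}
Trev(H_0,\mathcal{T}_1) = -(2n+2)\sum_{i<j}|H_0(i\vee j)| + 3\sum_{i<j}|H_0(i\vee j)|\,|H_1(i\vee j)|,
\end{equation*}
and the analogous expansion for $Trev(H_1,\mathcal{T}_0)$ has a second term that is symmetric in $H_0,H_1$. The lemma therefore reduces to proving that $\Sigma(H):=\sum_{i<j}|H(i\vee j)|$ depends only on $n$, not on the binary tree $H$.

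To establish this tree-invariance, I would double count: write $|H(i\vee j)| = n - |\{k\neq i,j : k\notin H(i\vee j)\}|$, and note that $k\notin H(i\vee j)$ is exactly the event that $\{i,j\}$ is the first pair to merge in the triple $\{i,j,k\}$. Summing the indicator over pairs $i<j$ and $k\neq i,j$ therefore counts each unordered triple exactly once (each triple has a unique first-merging pair), which gives $\Sigma(H) = n\binom{n}{2} - \binom{n}{3}$, independent of $H$. Combined with the previous display this yields $Trev(H_0,\mathcal{T}_1) = Trev(H_1,\mathcal{T}_0)$.

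The main obstacle is the closed-form identity for $s^{AddS3}_{ij}(\mathcal{T})$: it requires a careful case analysis over the six orderings of each triple, tracking which orderings belong to the triplet set \eqref{eqn-T0-from-H0} induced by $H$ and which of the four terms in the definition of AddS3 each case activates. Once this identity is in hand, the rest is algebraic manipulation plus the elementary combinatorial invariance of $\Sigma(H)$.
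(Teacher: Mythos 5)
Your proposal is correct and follows essentially the same route as the paper's proof: derive the closed form $s^{AddS3}_{ij} = 2n+2-3|H(i\vee j)|$ for tree-induced triplet sets, apply Theorem~\ref{thm_equivalence}, observe that the cross term $3\sum_{i<j}|H_0(i\vee j)|\,|H_1(i\vee j)|$ is symmetric, and reduce to the tree-invariance of $\sum_{i<j}|H(i\vee j)| = \tfrac13(n^3-n)$. The only (cosmetic) difference is that you establish this invariance by a self-contained double count over triples, whereas the paper cites it as Dasgupta's cost under unit similarities.
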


Combining the above two lemmas, we obtain that 
\[ Trev(H_0,\mathcal{T}_0) > Trev(H_0,\mathcal{T}_1) = Trev(H_1,\mathcal{T}_0) \] for any tree $H_1$ and corresponding set of triplets $\mathcal{T}_1$. This directly implies the Proposition \ref{thm-H0-maxrev}.
We now complete the proof by proving Lemmas \ref{lem-T0-maximiser} and \ref{lem-HT-swap-equality}.

\begin{proof}[Proof of Lemma \ref{lem-T0-maximiser}]
Recall that
\begin{align*} 
Trev(H_0,\mathcal{T}_0) 
&= \sum_{(i,j,k)\in\mathcal{T}_0} \underbrace{\Big(|H_0(i \lor k)| - |H_0(i \lor j)|\Big)}_{=: D_0(i,j,k)}
\end{align*}
is a sum of positive terms. 
For convenience, we denote each difference by $D_0(i,j,k)>0$.

Let $\mathcal{T}_1$ be the set of triplets generated by another tree $H_1$.
Note that at least one pair of triplets in $\mathcal{T}_1$ has to be different from $\mathcal{T}_0$, otherwise $H_1$ and $H_0$ would be isomorphic transformations of one another. Without loss of generality, assume that the pair $(i,j,k), (j,i,k)$ has been replaced by the pair $(i,k,j), (k,i,j)$, that is, $i,k$ are merged in $H_1$ before they are merged to $j$. 
Observe that we can write
\begin{align*}
&Trev(H_0,\mathcal{T}_0)- Trev(H_0,\mathcal{T}_1)
= \sum_{\substack{(i,j,k),\\(i,k,j)\\ \in \mathcal{T}_0\backslash \mathcal{T}_1}} 
 D_0(i,j,k) + D_0(j,i,k) -D_0(i,k,j) - D_0(k,i,j)
\end{align*}
where each term in the summation can be computed as
\begin{align*}
  &D_0(i,j,k) + D_0(j,i,k) -D_0(i,k,j) - D_0(k,i,j)
  \\
  &= |H_0(i\vee k)| + |H_0(j\vee k)| - 2|H_0(i \vee j)|
   - \Big(|H_0(i\vee j)| + |H_0(k\vee j)| - 2|H_0(i \vee k)|\Big)
  \\
  &= 3 D_0(i,j,k),
\end{align*}
which is strictly positive for every $(i,j,k)\in \mathcal{T}_0$.
%
Summing over all $(i,j,k), (j,i,k) \in \mathcal{T}_0\backslash \mathcal{T}_1$, we have that $Trev(H_0,\mathcal{T}_0) > Trev(H_0,\mathcal{T}_1)$ for any $\mathcal{T}_1$ generated by another tree $H_1$.
\end{proof}

\begin{proof}[Proof of Lemma \ref{lem-HT-swap-equality}]
Let $\{s_{0ij}\}_{i,j}$ be the pairwise AddS3 similarity induced by $\mathcal{T}_0$, and $\{s_{1ij}\}_{i,j}$ be the AddS3 similarity from $\mathcal{T}_1$.
Due to the definition of $\mathcal{T}_0$, we note that, for any $k\neq i,j$, the term 
$(\mathbb{I}_{(i,j,k)\in\mathcal{T}_0} - \mathbb{I}_{(i,k,j)\in\mathcal{T}_0} + \mathbb{I}_{(j,i,k)\in\mathcal{T}_0} - \mathbb{I}_{(j,k,i)\in\mathcal{T}_0})$ either takes the value 2 if $k\notin (i\vee j)$--- that is, $i,j$ is merged in $H_0$ before $k$---or the value $-1$ if $k$ is merged to either $i$ or $j$ before $(i\vee j)$.
Summing over all $k\neq i,j$ gives
\begin{align*} 
s_{0ij} &= 2(n - |H_0(i\vee j)|) - (|H_0(i\vee j)| - 2) = 2n + 2 - 3 |H_0(i\vee j)|
\end{align*}
for every $i,j$. Using the same arguments $s_{1ij}$ can be expressed as $s_{1ij} = 2n + 2 - 3 |H_1(i\vee j)|$. We can now use the equivalence in Theorem~\ref{thm_equivalence} to write
\begin{align*}
    Trev(H_0,\mathcal{T}_1) - Trev(H_1,\mathcal{T}_0) 
    &= - \sum_{i<j} (s_{1ij} |H_0(i\vee j)| - s_{0ij} |H_1(i\vee j)|) \\
    &= - \sum_{i<j} \Big((2n + 2 - 3 |H_1(i\vee j)|) |H_0(i\vee j)|
   - (2n + 2 - 3 |H_0(i\vee j)|) |H_1(i\vee j)|\Big) \\
    &= - (2n+2)\sum_{i<j} \Big(|H_0(i\vee j)| -  |H_1(i\vee j)|\Big),
\end{align*}
which is zero, since $\sum\limits_{i<j} |H_0(i\vee j)| = \sum\limits_{i<j} |H_1(i\vee j)| = \frac13(n^3-n)$ is Dasgupta's cost  for any tree on $[n]$ when all pairwise similarities are 1 \cite[][Theorem 3]{dasgupta2016cost}.
Hence, the claim.
\end{proof}

\section{Proof of Theorem \ref{thm-approx} and Corollary \ref{thm-approx-noisy}}

We first state and prove two lemmas that are essential for the proof of Theorem \ref{thm-approx}. The first lemma shows that $Trev(H_0,\mathcal{T}_0) = \Omega(n^4)$.
The second lemma derives concentration inequalities for the AddS3 similarities $s_{ij}$, which is then used in the proof of Theorem \ref{thm-approx} to derive bound on $|Trev(H,\mathcal{T})-Trev(H,\mathcal{T}_0)|$ for all $H$, and subsequently arrive at the claim.

\begin{lemma}
\label{lem-H0T0-bound}
Given $\epsilon \in (0,1)$ and $n \geq 8/\epsilon$, the following holds. If $H_0$ is a hierarchy on $[n]$ and $\mathcal{T}_0$ is the set of triplets induced by $H_0$, \[Trev(H_0,\mathcal{T}_0) \geq \frac{(1-\epsilon)n^4}{12}.\]
\end{lemma}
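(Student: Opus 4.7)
The plan is to turn $Trev(H_0,\mathcal{T}_0)$ into a recursion on the top-level split of $H_0$ and then close an induction via a clean algebraic identity. First, I would invoke the formula $s^{AddS3}_{ij}=2n+2-3|H_0(i\vee j)|$ derived inside the proof of Lemma~\ref{lem-HT-swap-equality}, together with the equivalence $Trev(H_0,\mathcal{T}_0)=-Dcost(H_0,s^{AddS3})$ from Theorem~\ref{thm_equivalence}, to obtain
\[
Trev(H_0,\mathcal{T}_0) \;=\; 3\sum_{i<j}|H_0(i\vee j)|^2 \;-\; (2n+2)\sum_{i<j}|H_0(i\vee j)|.
\]

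Next, if the root of $H_0$ splits into sub-trees $H_1,H_2$ of sizes $n_1,n_2=n-n_1$, I would classify each triplet $(a,b,c)\in \mathcal{T}_0$ by which of $H_1,H_2$ contains $a,b,c$. A short case analysis shows that only configurations with $a,b$ in the same sub-tree and $c$ in the other satisfy the strict inequality that defines $\mathcal{T}_0$; for each such triplet the contribution is $n-|H_r(a\vee b)|$. Summing these using $\sum_{a<b\in H_r}|H_r(a\vee b)|=(n_r^3-n_r)/3$ and simplifying yields the recursion
\[
Trev(H_0,\mathcal{T}_0) \;=\; Trev(H_1,\mathcal{T}_0(H_1)) + Trev(H_2,\mathcal{T}_0(H_2)) + \tfrac{1}{3}n_1 n_2\bigl(n^2+4n_1 n_2-6n+4\bigr).
\]

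I then claim inductively that $Trev(H_0,\mathcal{T}_0) \geq T^*(n):=\frac{n(n+1)(n-1)(n-2)}{12}$, the value attained by the caterpillar tree. The inductive step amounts to verifying
\[
T^*(n_1) + T^*(n_2) + \tfrac{1}{3}n_1 n_2(n^2+4n_1 n_2-6n+4) \;\geq\; T^*(n).
\]
Since $12\,T^*(k)=k^4-2k^3-k^2+2k$, expanding the difference in the symmetric polynomials $n_1+n_2=n$ and $m:=n_1 n_2$ (using $n_1^4+n_2^4=(n^2-2m)^2-2m^2$ and $n_1^3+n_2^3=n(n^2-3m)$) produces, after cancellation of the quartic and cubic pieces, the pleasant factorization
\[
\mathrm{LHS} - T^*(n) \;=\; \tfrac{3}{2}\,n_1 n_2\,(n_1-1)(n_2-1),
\]
which is manifestly non-negative and vanishes exactly when $\{n_1,n_2\}=\{1,n-1\}$. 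Producing this factorization cleanly is the main computational obstacle; it reduces to a careful symbolic expansion but relies on the identity $n_1 n_2-(n-1)=(n_1-1)(n_2-1)$.

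Combining the two ingredients gives $Trev(H_0,\mathcal{T}_0)\geq T^*(n)$. To match the stated form, I observe that $(n+1)(n-1)(n-2)=n^3-2n^2-n+2 \geq (1-\epsilon)n^3$ is equivalent to $\epsilon n^3\geq 2n^2+n-2$; for $n\geq 8/\epsilon$ we have $\epsilon n^3\geq 8n^2\geq 2n^2+n-2$ whenever $n\geq 1$, which concludes the proof.
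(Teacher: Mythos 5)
Your proof is correct, and the inductive step takes a genuinely different route from the paper's. Both proofs start identically, using $s^{AddS3}_{ij}=2n+2-3|H_0(i\vee j)|$ and Theorem~\ref{thm_equivalence} to write $Trev(H_0,\mathcal{T}_0)=\sum_{i<j}|H_0(i\vee j)|\bigl(3|H_0(i\vee j)|-2n-2\bigr)$. The paper then splits this into two sums over internal nodes, evaluates $\sum_N|N_1||N_2||N|=\frac{n^3-n}{3}$ exactly via Dasgupta's identity, and runs a short induction only on the quartic piece to get $\sum_N|N_1||N_2||N|^2\geq\frac{n^4}{4}$, yielding $Trev(H_0,\mathcal{T}_0)\geq\frac{n^4}{12}-\frac{2}{3}(n^3-n^2-n)$. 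You instead derive a recursion for the \emph{full} revenue across the root split (I checked your surplus term $\frac{1}{3}n_1n_2(n^2+4n_1n_2-6n+4)$ and it is correct) and induct on $T^*(k)=\frac{k(k+1)(k-1)(k-2)}{12}$; your factorization of the inductive slack as $\frac{3}{2}n_1n_2(n_1-1)(n_2-1)$ also checks out, using $12T^*(k)=k^4-2k^3-k^2+2k$ and $n_1n_2-(n-1)=(n_1-1)(n_2-1)$. What your approach buys is a strictly sharper and in fact tight bound --- $T^*(n)$ is exactly the minimum of $Trev(H_0,\mathcal{T}_0)$ over trees $H_0$, attained precisely by caterpillar trees, since the slack vanishes only for $1$-versus-$(n-1)$ splits --- at the price of a heavier symbolic computation; the paper's induction is cruder ($\frac{n^4}{12}-\frac{2}{3}(n^3-n^2-n)$ versus your $\frac{n^4-2n^3-n^2+2n}{12}$) but requires only the one-line superadditivity $n_1n_2n^2+\frac{n_1^4}{4}+\frac{n_2^4}{4}\geq\frac{n^4}{4}$. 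Both bounds absorb the lower-order terms under the same hypothesis $n\geq 8/\epsilon$, and your closing arithmetic ($\epsilon n^3\geq 8n^2\geq 2n^2+n-2$) is valid. One presentational caveat: your statement that ``only configurations with $a,b$ in the same sub-tree and $c$ in the other satisfy the strict inequality'' should be qualified to refer only to triplets not contained entirely within one sub-tree, since the within-subtree triplets are of course also in $\mathcal{T}_0$ and are what produce the recursive terms.
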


\begin{proof}
We start with the equivalence in Theorem \ref{thm_equivalence} to write the revenue as $Trev(H_0,\mathcal{T}_0) = - \sum_{i<j} s_{0ij} |H_0(i\vee j)|$, where $s_{0ij}$ is the pairwise AddS3 similarity induced by $\mathcal{T}_0$.
Due to the definition of $\mathcal{T}_0$, we note that, for any $k\neq i,j$, the term 
$(\mathbb{I}_{(i,j,k)\in\mathcal{T}_0} - \mathbb{I}_{(i,k,j)\in\mathcal{T}_0} + \mathbb{I}_{(j,i,k)\in\mathcal{T}_0} - \mathbb{I}_{(j,k,i)\in\mathcal{T}_0})$ either takes the value 2 if $k\notin (i\vee j)$--- that is, $i,j$ is merged in $H_0$ before $k$---or the value $-1$ if $k$ is merged to either $i$ or $j$ before $(i\vee j)$.
Summing over all $k\neq i,j$ gives
\begin{align*} 
s_{0ij} &= 2(n - |H_0(i\vee j)|) - (|H_0(i\vee j)| - 2) = 2n + 2 - 3 |H_0(i\vee j)|
\end{align*}
for every $i,j$.
Let $N = (i\vee j)$ denote the least common ancestor of $i,j$ in $H_0$, and $N_1,N_2$ be the two children of $N$. Note that $|N_1|\cdot|N_2|$ pairs of $i,j$ are merged at $N$. Hence, we can rewrite the revenue as
\begin{align*}
   Trev(H_0,\mathcal{T}_0) 
   = - \sum_{i<j} s_{0ij} |H_0(i\vee j)| 
   &= \sum_{i<j} |H_0(i\vee j)| \big(3 |H_0(i\vee j)| - 2n -2\big)
   \\&= \sum_{N \in H_0} |N_1|\cdot |N_2|\cdot |N| \cdot \big(3|N| - 2n - 2\big)
   \\&= 3 \sum_{N \in H_0} |N_1| |N_2| |N|^2 - (2n+2) \sum_{N \in H_0} |N_1| |N_2| |N|
\end{align*}
where the summations are over all internal nodes $N$ in the tree $H_0$, with $N_1,N_2$ deenoting the two children of $N$.
The second summation is Dasgupta's cost for any tree on $[n]$ with all pairwise similarities as 1, and evaluates to $\frac{n^3-n}{3}$ \cite[][Theorem 3]{dasgupta2016cost}.
On the other hand, we claim that the first sum has lower bound $\sum\limits_{N\in H_0} |N_1||N_2||N|^2 \geq \frac{n^4}{4}$.

We prove this claim through induction on $n$. The claim is easy to verify for $n=2,3$. For $n\geq 4$, we assume that claim holds for any $H_0$ with $k$ leaves, when $k<n$ (equivalently, $H_0$ on $[k]$).
Consider the tree $H_0$ on $[n]$ such that the root node is split into two nodes of size $n_1, n_2 <n$ (note $n_1+n_2=n$). From our inductive hypothesis,
\begin{align*}
   \sum\limits_{N\in H_0} |N_1||N_2||N|^2
   &\geq n_1n_2n^2 + \frac{n_1^4}{4} + \frac{n_2^4}{4}
   \\&= \frac{1}{4} \left( 4n_1^3n_2 + 8n_1^2n_2^2 + 4n_1n_2^3 + n_1^4 + n_2^4\right)
   \\&\geq \frac14(n_1+n_2)^4 = \frac{n^4}{4}
\end{align*}
which proves the claim for any $n$. Combining all terms, we have
\begin{align*}
    Trev(H_0,\mathcal{T}_0) 
    &\geq \frac{3n^4}{4} - \frac{(2n+2)(n^3-n)}{3}
    = \frac{n^4}{12} - \frac23 (n^3-n^2-n).
\end{align*}
Now, given $\epsilon \in (0,1)$, notice that $\frac23 (n^3-n^2-n) \leq \frac{\epsilon n^4}{12}$ for any $n > \frac{8}{\epsilon}$, which proves the lemma. 
\end{proof}

We now state and prove the concentration results for the AddS3 similarity computed from the sampled triplet set $\mathcal{T}$.
We first recall the sampling and introduce some notations. For any $n$, with probability $p_n \in (0,1)$, a pair of triplets $(i,j,k),(j,i,k) \in \mathcal{T}_0$ is included in $\mathcal{T}$, independent of other pairs. To formalise this, we define the random variable $\chi_{ijk}=\chi_{jik} \sim \text{Bernoulli}(p_n)$ such that the collection $\{\chi_{ijk} ~:~ i<j, k\neq i,j\}$ are mutually independent.
If $s_{ij}$ denotes the pairwise AddS3 similarity, computed using $\mathcal{T}$, then observe that
\begin{equation}
\label{eqn:sij-noiseless}
s_{ij} = \sum_{k\neq i,j} 
\chi_{ijk}(\mathbb{I}_{(i,j,k)\in\mathcal{T}_0} - \mathbb{I}_{(i,k,j)\in\mathcal{T}_0} + \mathbb{I}_{(j,i,k)\in\mathcal{T}_0} - \mathbb{I}_{(j,k,i)\in\mathcal{T}_0})
\end{equation}
Hence, for a fixed $H_0$---and $\mathcal{T}_0$---the similaritiy $s_{ij}$ is a weighted sum of independent Bernoullis, with weights either $2$ or $-1$ (cf. proof of Lemma \ref{lem-H0T0-bound}).
As a consequence, $\mathbb{E}[s_{ij}] = p_n s_{0ij}$, where the expectation is with respect to sampling, and furthermore we can state the following concentration for all pairwise similarities.

\begin{lemma}
\label{lem-AddS3-conc}
Assume $n \geq 8$. Let $\mathcal{T}$ denote a random subset of $\mathcal{T}_0$ (obtained from the aforementioned sampling), and $\{s_{ij}\}_{i<j},\{s_{0ij}\}_{i<j}$ denote the pairwise AddS3 similarities computed using triplets in $\mathcal{T}$ and $\mathcal{T}_0$, respectively.
For any $\alpha > 0$, if $p_n > (\alpha+2)\log n / n$, then with probability $1-2n^{-\alpha}$,
\[
\frac{p_nn^3}{10} \leq |\mathcal{T}| \leq \frac{p_n n^3}{2}
\text{~~and~~}
\max\limits_{i<j} |s_{ij} - p_ns_{0ij}| \leq 4\sqrt{(\alpha+2)p_n n \log n}.
\]
\end{lemma}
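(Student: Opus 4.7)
The plan is to prove the two claims by separate tail bounds and combine them via a union bound.

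For the bound on $|\mathcal{T}|$, I would first note that $\mathcal{T}_0$ partitions into $\binom{n}{3}$ unordered triples $\{i,j,k\}$, each contributing a pair $(i,j,k),(j,i,k)$ that is included in $\mathcal{T}$ via a single independent $\text{Bernoulli}(p_n)$ trial $\chi_{ijk}$. Thus $|\mathcal{T}| = 2\sum_{\{i,j,k\}} \chi_{ijk}$ is (twice) a sum of i.i.d.\ Bernoullis with expectation $\mathbb{E}|\mathcal{T}| = 2p_n \binom{n}{3}$, which for $n\geq 8$ lies in $[p_nn^3/5,\; p_nn^3/3]$. A multiplicative Chernoff bound at relative deviation $\tfrac{1}{2}$ gives the sandwich $p_nn^3/10 \le |\mathcal{T}| \le p_nn^3/2$ with probability at least $1-n^{-\alpha}$, because under the hypothesis $p_n\geq(\alpha+2)\log n/n$ the Chernoff exponent scales like $\mathbb{E}|\mathcal{T}|/12 = \Omega((\alpha+2)n^2\log n)$, which dwarfs $\alpha\log n$.

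For the uniform similarity deviation bound, I would fix $i<j$ and, following the decomposition used in the proof of Lemma~\ref{lem-H0T0-bound}, write
\[
s_{ij} - p_n s_{0ij} \;=\; \sum_{k\neq i,j} c_k\bigl(\chi_{ijk}-p_n\bigr),
\]
where $c_k \in \{-1,+2\}$ is determined by $H_0$ (namely $c_k=+2$ if $k$ lies outside the subtree $H_0(i\vee j)$, and $c_k=-1$ otherwise). This is a sum of $n-2$ independent, centred, bounded ($|c_k(\chi_{ijk}-p_n)|\leq 2$) random variables whose variances sum to at most $4(n-2)p_n \leq 4np_n$. Bernstein's inequality then yields
\[
\Pr\!\left[|s_{ij}-p_n s_{0ij}|\geq t\right] \;\leq\; 2\exp\!\left(-\frac{t^2/2}{4np_n + 2t/3}\right).
\]
Plugging in $t = 4\sqrt{(\alpha+2)p_n n\log n}$, the hypothesis $p_n\geq(\alpha+2)\log n/n$ forces $t\leq 6np_n$, so the linear term in the denominator is absorbed by the variance term ($2t/3\leq 4np_n$). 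The exponent then simplifies to at least $(\alpha+2)\log n$, giving a per-pair failure probability of at most $2n^{-(\alpha+2)}$. A union bound over the $\binom{n}{2}\leq n^2/2$ pairs contributes an $n^2/2$ factor, so the uniform bound $\max_{i<j}|s_{ij}-p_ns_{0ij}|\leq 4\sqrt{(\alpha+2)p_n n\log n}$ fails with probability at most $n^{-\alpha}$.

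A final union bound over the two bad events yields the stated $1-2n^{-\alpha}$ probability. The only technical point that needs care is verifying that the chosen $t$ lies in the subgaussian regime $t\leq 6np_n$ of Bernstein under the prescribed lower bound on $p_n$; this is exactly what keeps the per-pair failure probability at the $n^{-(\alpha+2)}$ scale that survives the $\binom{n}{2}$-sized union bound. The Chernoff step for $|\mathcal{T}|$ is comparatively slack because $\mathbb{E}|\mathcal{T}|$ is already of order $n^2\log n$ under the same hypothesis, so constants are not delicate there.
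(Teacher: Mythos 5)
Your proposal is correct and follows essentially the same route as the paper: the same decomposition of $s_{ij}-p_ns_{0ij}$ into a weighted sum of centred Bernoullis with weights in $\{-1,+2\}$, the same Bernstein bound with $t=4\sqrt{(\alpha+2)p_n n\log n}$ absorbed into the variance term, the same union bound over $\binom{n}{2}$ pairs, and the same multiplicative Chernoff bound for $|\mathcal{T}|$. The only cosmetic difference is that you count $|\mathcal{T}_0|=2\binom{n}{3}$ directly from the unordered triples, whereas the paper evaluates the per-node sum $\sum_N|N_1||N_2|(|N|-2)$; both land on $|\mathcal{T}_0|\in[n^3/5,\,n^3/3]$ and the rest is identical.
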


\begin{proof}
We first derive the bound on $\max\limits_{i<j} |s_{ij} - p_ns_{0ij}|$.
From the expression of $s_{ij}$, mentioned above, we note that $s_{ij} - p_ns_{0ij}$ is a sum of $(n-2)$ independent mean zero random variables, with each term in $[-2,2]$ and variance bounded by $4p_n$. By Bernstein's inequality,
\[
\mathbb{P}\big( |s_{ij} - p_ns_{0ij}| > \delta) \leq 2\exp\left(-\frac{\delta^2}{8p_n(n-2) + \frac43\delta}\right).
\]
For any $\alpha > 0$, if $p_n > (\alpha+2)\log n /n$ and setting $\delta = 4\sqrt{(\alpha+2) p_n n \log n}$, we have $|s_{ij} - p_ns_{0ij}| > 4\sqrt{(\alpha+2) p_n n \log n}$ with probability $\leq 2 n^{-(\alpha+2)}$.
Using union bound over all $\binom{n}{2} < n^2/2$ pairs of $i,j$, we have that $\max\limits_{i<j} |s_{ij} - p_ns_{0ij}|$ exceeds the claim bound with probability $\leq n^{-\alpha}$.

To derive the bound on $|\mathcal{T}|$, we first bound $|\mathcal{T}_0|$. From definition of $\mathcal{T}_0$, every internal node $N \in H_0$ contributes $|N_1||N_2|(|N|-2)$ triplets to $\mathcal{T}_0$ since merger of every $i,j$ contributes to $|N|-2$ triplets, one for each $k$ that is either merged with $i$ or $j$ at a lower level. Hence, 
\begin{align*}
|\mathcal{T}_0| = \sum\limits_{N\in H_0} |N_1||N_2|(|N|-2)
&= \sum\limits_{N\in H_0} |N_1||N_2||N| - 2\sum\limits_{N\in H_0} |N_1||N_2| 
= \frac{n^3 - n}{3} - 2 \cdot \frac{n^2}{2}
= \frac{n^3 - 3n^2 - n}{3} \;.
\end{align*}
The first summation follows from \cite[][Theorem 3]{dasgupta2016cost}, whereas the second sum follows from induction with hypothesis that the sum evaluates to $n^2/2$.
We now note that $\mathbb{E}[|\mathcal{T}|] = p_n|\mathcal{T}_0|$ and apply multiplicative Chernoff bound to get that $\frac12 p_n |\mathcal{T}_0| \leq |\mathcal{T}| \leq \frac32 p_n |\mathcal{T}_0|$ with probability $1- 2e^{-p_n|\mathcal{T}_0|/12}$.
To simplify the terms, we use $|\mathcal{T}_0| = \frac{n^3 - 3n^2 - n}{3} \in [\frac{n^3}{5}, \frac{n^3}{3}]$, where the lower bound holds for $n\geq 8$.
This leads to the bounds on $|\mathcal{T}|$, whereas for the probability, note that $2e^{-p_n|\mathcal{T}_0|/12} \leq 2n^{-(\alpha+2)} \leq n^{-\alpha}$, where the first inequality follows using $|\mathcal{T}_0| \geq n^3/5, p_n > (\alpha+2)\log n/n$ and $n\geq 8$.
\end{proof}

Below, we prove Theorem \ref{thm-approx} using Lemmas \ref{lem-H0T0-bound}--\ref{lem-AddS3-conc}.

\begin{proof}[Proof of Theorem \ref{thm-approx}]
We first derive bounds on the deviation of the revenue $Trev$ of any tree $H$ due to sampling. The concentration of $\{s_{ij}\}_{i<j}$ ensures that we can state a deviation bound that uniformly holds for all $H$, as shown below.
From the equivalence in Theorem \ref{thm_equivalence}, we write for any $H$,
\begin{align*}
 |Trev(H,\mathcal{T}) - p_n Trev(H,\mathcal{T}_0)|
 &= \left| \sum_{i<j} (s_{ij} - p_n s_{0ij}) |H(i\vee j)| \right|
 \\&\leq \sum_{i<j} |s_{ij} - p_n s_{0ij}| \cdot |H(i\vee j)|
 \\&\leq 4\sqrt{(\alpha+2)p_n n\log n} \cdot \sum_{i<j} |H(i\vee j)|,
\end{align*}
where the last bound holds with probability $1-n^{-\alpha}$ due to Lemma \ref{lem-AddS3-conc}.
Note that $\sum\limits_{i<j} |H(i\vee j)|$ is Dasgupta's cost of tree $H$ on $[n]$ if all pairwise similarities are 1, and hence the summation is $\frac{n^3-n}{3} < \frac{n^2}{3}$.
We conclude that, with probability $1-n^{-\alpha}$, 
\[
\max_H |Trev(H,\mathcal{T}) - p_n Trev(H,\mathcal{T}_0)| < (4/3) \sqrt{(\alpha+2) p_n n^7 \log n}.
\]
We now write
\begin{align*}
Trev(\widehat{H},\mathcal{T}_0)
&\geq \frac{1}{p_n}Trev(\widehat{H},\mathcal{T}) - \frac43\sqrt{\frac{(\alpha+2) n^7\log n}{p_n}}
\\&\geq \frac{1}{p_n}Trev(H_0,\mathcal{T}) - \frac43\sqrt{\frac{(\alpha+2) n^7\log n}{p_n}}
\geq Trev(H_0,\mathcal{T}_0) - \frac83\sqrt{\frac{(\alpha+2)n^7\log n}{p_n}},
\end{align*}
where the first and third inequalities follow from the deviation bound stated above, and the second inequality holds since $\widehat{H}$ maximises $Trev(H,\mathcal{T})$.
For $p_n> 2^{12} (\alpha+2)\log n/n\epsilon^2$, the second term is smaller than $\epsilon n^4/24 \leq \epsilon (1-\epsilon) n^4/12 \leq \epsilon\, Trev(H_0,\mathcal{T}_0)$, where the first inequality uses the fact $\epsilon\leq 1/2$ and the second inequality is due to Lemma \ref{lem-H0T0-bound}. Hence the claim. 
\end{proof}

\begin{proof}[Proof of Corollary \ref{thm-approx-noisy}]
In the noisy setting, the random flipping can be modelled through the independent variables $\{\zeta^i_{jk} ~:~ i, j<k\}$ where $\zeta^i_{jk}\sim \text{Bernoulli}(\delta)$ is the indicator for triplet $(i,j,k)$ to be flipped with triplet $(i,k,j)$. Using the notation of \eqref{eqn:sij-noiseless}, the variable $\xi_{ijk}\zeta^i_{jk}$ indicates $(i,k,j)\in \mathcal{T}'$ (noisy triplet) whereas $\xi_{ijk}(1-\zeta^i_{jk})$ indicates correct triplet $(i,j,k)\in \mathcal{T}'$. Hence,
\begin{equation}
\label{eqn:sij-noisy}
s_{ij} = \sum_{k\neq i,j} 
\chi_{ijk}(1-2\zeta^i_{jk})(\mathbb{I}_{(i,j,k)\in\mathcal{T}_0} - \mathbb{I}_{(i,k,j)\in\mathcal{T}_0}) + \chi_{ijk}(1-2\zeta^j_{ik})(\mathbb{I}_{(j,i,k)\in\mathcal{T}_0} - \mathbb{I}_{(j,k,i)\in\mathcal{T}_0}),
\end{equation}
and $\mathbb{E}[s_{ij}] = (1-2\delta)p_n s_{0ij}$.
Following the arguments of Lemma \ref{lem-AddS3-conc}, we can  that, with probability $1-n^{-\alpha}$,
\[
\max\limits_{i<j} |s_{ij} - (1-2\delta)p_ns_{0ij}| < 4\sqrt{(\alpha+2)p_n n \log n}
\]
where the deviation bound is same as in Lemma \ref{lem-AddS3-conc} since the same variance bound holds for the independent random terms in the summation in \eqref{eqn:sij-noisy}. Subsequently, following the proof of Theorem \ref{thm-approx}, we have with probability $1-n^{-\alpha}$, 
\[
\max_H |Trev(H,\mathcal{T}) - (1-2\delta)p_n \cdot  Trev(H,\mathcal{T}_0)| = (4/3)\sqrt{(\alpha+2) p_n n^7 \log n}.
\]
and so
\begin{align*}
&Trev(\widehat{H},\mathcal{T}_0)
\geq \frac{1}{(1-2\delta)p_n}Trev(\widehat{H},\mathcal{T}) - \frac{4}{3(1-2\delta)} \sqrt{\frac{(\alpha+2)n^7\log n}{p_n}}
\\&\geq \frac{1}{(1-2\delta)p_n}Trev(H_0,\mathcal{T}) - \frac{4}{3(1-2\delta)}\sqrt{\frac{(\alpha+2)n^7\log n}{p_n}}
\geq Trev(H_0,\mathcal{T}_0) - \frac{8}{3(1-2\delta)}\sqrt{\frac{(\alpha+2)n^7\log n}{p_n}}.
\end{align*}
Following the proof of Theorem \ref{thm-approx}, second term is $\leq \epsilon n^4/24$ for $p_n > \frac{2^{12} \cdot (\alpha+2) \log n}{n\epsilon^2(1-2\delta)^2}$.
\end{proof}


\section{Standard Deviation on Real Data}

In Table~\ref{tab:comparisons_real_std}, we provide the standard deviations for the real data experiments that were omitted in the main paper.

\begin{table*}[ht]
\centering
\caption{Experiments on real datasets. For the triplets-based methods, AddS3-AL typically obtains dendrograms with the best revenues. For the quadruplets-based methods, AddS4-AL and 4K-AL show similar results. Using Cosine similarities yields slightly better hierarchies than comparison-based methods.}
\begin{tabular}{@{}llllll@{}}
\toprule
\multirow{2}{*}{Dataset} &  & \multicolumn{4}{c}{Triplet}     \\ \cmidrule(l){3-6} 
 &  & \multicolumn{1}{c}{AddS3-AL} & \multicolumn{1}{c}{tSTE-AL} & \multicolumn{1}{c}{MulK3-AL} & \multicolumn{1}{c}{Cosine-AL} \\ \midrule
Zoo                      &  & $\underline{2.77\times 10^5 \pm 5\times 10^3}$          & $2.16\times 10^5 \pm 8\times 10^3$ & $2.04\times 10^5  \pm 2\times 10^4$ & $\underline{2.82\times 10^5 \pm 3\times 10^3}$ \\
Glass                    &  & $\underline{2.16\times 10^6 \pm 4\times 10^4}$ & $1.97\times 10^6 \pm 3\times 10^4$ & $1.41\times 10^6 \pm 5\times 10^4$ & $\underline{2.11\times 10^6 \pm 2\times 10^4}$          \\
MNIST                    &  & $1.89 \times 10^9 \pm 4\times 10^7$          & $\underline{2.06\times 10^9 \pm 4\times10^7}$ & $1.72\times10^9 \pm 6\times 10^7$ & \underline{$2.06\times10^9 \pm 2\times10^6$}          \\
Car                      &  & $1.52\times 10^5$ & $\underline{1.56\times 10^5 \pm 2 \times 10^3}$ & $1.26\times 10^5$ & -          \\
Food                      &  & $\underline{6.14\times 10^6}$ & $5.99\times 10^6 \pm 2\times 10^4$ & $6.10\times 10^6$ & -          \\
Vogue                      &  & $\underline{2.72\times 10^4}$ & $2.10\times 10^4 \pm 1\times 10^3$ & $3.02\times 10^3$ & -          \\
Nature                      &  & $\underline{2.65\times 10^5}$ & $2.06\times 10^5 \pm 8\times 10^3$ & $1.23\times 10^5$ & -          \\
Imagenet                      &  & $\underline{7.18\times 10^7}$ & $6.57\times 10^7 \pm 8\times 10^5$ & $3.44\times 10^7$ & -          \\
\bottomrule
\end{tabular}%
\vspace{1em}
\begin{tabular}{@{}lllll@{}}
\toprule
\multirow{2}{*}{Dataset} &  & \multicolumn{3}{c}{Quadruplet}                                                           \\ \cmidrule(l){3-5} 
                         &  & \multicolumn{1}{c}{AddS4-AL} & \multicolumn{1}{c}{4K-AL} & \multicolumn{1}{c}{Cosine-AL} \\ \midrule
Zoo   &  & $2.83\times 10^5 \pm 1\times 10^4$ & \underline{$2.87\times 10^5 \pm 1\times 10^4$} & $\underline{2.95\times 10^5 \pm 3\times 10^3}$ \\
Glass &  & $2.43\times 10^6 \pm 3\times 10^4$ & $2.43\times 10^6 \pm 3\times 10^4$ & $\underline{2.49\times 10^6 \pm 1\times 10^4}$ \\
MNIST &  & $1.91\times10^9 \pm 4\times10^7$ & $1.88\times10^9 \pm 3\times10^7$ & $\underline{2.08\times10^9 \pm 2\times10^6}$ \\
Car   &  & $\underline{1.52\times 10^5}$ & $1.13\times 10^5$ & - \\
Food   &  & $\underline{6.14\times 10^6}$ & $\underline{6.14\times 10^6}$ & - \\
Vogue   &  & $\underline{2.72\times 10^4}$ & $2.55\times 10^4$ & - \\
Nature   &  & $\underline{2.65\times 10^5}$ & $2.23\times 10^5$ & - \\
Imagenet   &  & $\underline{7.18\times 10^7}$ & $6.99\times 10^7$ & - \\
\bottomrule
\end{tabular}%
\label{tab:comparisons_real_std}
\end{table*}

\section{Additional Results on the Planted Model}
In this section, we provide additional results on the Planted Model presented in Section 7.1 of the main paper. 
In Figure~\ref{fig:triplets_planted_revenue_aari_appendix}, we present the results obtained $n^2/2$, $n^2$ and $2n^2$ triplet comparisons respectively. Similarly, Figure~\ref{fig:quadruplets_planted_revenue_aari} displays the results obtained using $n^2/2$, $n^2$ and $2n^2$ quadruplet comparisons respectively. In all these figures,  we notice that, given a set signal to noise ratio, the ordering between the methods remains the same for the revenue and the AARI, that is the method with the highest revenue also has the highest AARI. In other words, a higher revenue indicates a better dendrogram.\footnote{In Figures \ref{fig:triplets_planted_revenue_aari_appendix}--\ref{fig:quadruplets_planted_revenue_aari_0.5_noise}, we use current time as the seeds for random numbers for each run.}

In Table~\ref{tab:triplets_planted_revenue_aari_n^2/2^k_seed_snr_1.5} we verify that this remains true for constant signal to noise ratios of $1.5$, and halving number of comparisons (Table \ref{tab:triplets_planted_revenue_aari} is an abbrieved version of Table \ref{tab:triplets_planted_revenue_aari_n^2/2^k_seed_snr_1.5}).
The highest revenue and AARI are underlined. We can notice that, when the revenue of AddS3-AL becomes higher than the revenue of tSTE-AL, the AARI also follows the same trend, thus confirming that selecting the dendrogram with the highest revenue is indeed a good way to select meaningful hierarchies. 

\section{Results on the Planted Model with Noisy Comparisons}
In the main paper, we only used the planted model to generate comparisons with no noise. In this section, we show that our findings remain true even when some of the comparisons are noisy, that is randomly flipped with a probability of $5\%$.
In Figure \ref{fig:triplets_planted_revenue_aari_0.5_noise}, we present the results obtained using $n^2/2$, $n^2$ and $2n^2$ noisy triplet comparisons respectively. In Figure \ref{fig:quadruplets_planted_revenue_aari_0.5_noise}, we present the results obtained using $n^2/2$, $n^2$ and $2n^2$ noisy quadruplet comparisons respectively. We notice that, given a set signal to noise ratio, the ordering between the methods remains the same for the revenue and the AARI, that is the method with the highest revenue is also the one with the highest AARI. In other words, a higher revenue indicates a better dendrogram.

\begin{table*}[ht]
\centering
\caption{Revenue and AARI of various methods for a signal to noise ratio of $1.5$ and halving of comparisons. In each line the highest revenue and the highest AARI are underlined, showing that the two measures are well aligned.}
\begin{tabular}{@{}c@{}llrl@{}lr@{}}
\toprule
\multirow{2}{*}{\begin{tabular}[c]{@{}c@{}}Number of\\triplets\end{tabular}} &
  \multirow{2}{*}{} &
  \multicolumn{2}{c}{AddS3-AL} &
   &
  \multicolumn{2}{c}{tSTE-AL} \\ \cmidrule(lr){3-4} \cmidrule(l){6-7} 
 &
   &
  \multicolumn{1}{c}{Revenue} &
  \multicolumn{1}{c}{AARI} &
  \multicolumn{1}{c}{} &
  \multicolumn{1}{c}{Revenue} &
  \multicolumn{1}{c}{AARI} \\ \midrule
$16n^2$  &  & $\underline{7.347 \times 10^7 \pm 1.3 \times 10^5}$ & $\underline{0.937 \pm 0.024}$ &  & $7.300\times 10^7 \pm 8.7 \times 10^4$ & $0.877 \pm 0.007$ \\
$8n^2$  &  & $\underline{3.667\times 10^7 \pm 1.7 \times 10^5}$ & $\underline{0.905 \pm 0.020}$ &  & $3.656\times 10^7 \pm 8.8 \times 10^4$ & $0.877 \pm 0.007$ \\
$4n^2$  &  & \underline{$1.823\times 10^7 \pm 1.0 \times 10^5$} & \underline{$0.862 \pm 0.023$} &  & $\underline{1.825\times 10^7 \pm 7.0 \times 10^4}$ & $\underline{0.874 \pm 0.012}$ \\
$2n^2$  &  & $8.962\times 10^6 \pm 9.8\times 10^4$ & $0.782 \pm 0.042$ &  & $\underline{9.130\times 10^6\pm 4.0 \times 10^4}$ & $\underline{0.867 \pm 0.014}$ \\
$n^2$  &  & $4.315\times 10^6 \pm 9.7 \times 10^4$ & $0.682 \pm 0.047$ &  & $\underline{4.559\times 10^6 \pm 2.0\times 10^4}$ & $\underline{0.868 \pm 0.012}$ \\
$n^2/2$  &  & $2.038\times 10^6 \pm 6.8\times 10^4$ & $0.593 \pm 0.037$ &  & $\underline{2.277\times 10^6 \pm 1.4 \times 10^4}$ & $\underline{0.860 \pm 0.014}$ \\
$n^2/4$  &  & $9.268\times 10^5 \pm 4.2 \times 10^4$ & $0.498 \pm 0.035$ &  & $\underline{1.137\times 10^6\pm 1.0 \times 10^4}$ & $\underline{0.851 \pm 0.065}$ \\
$n^2/8$  &  & $4.261\times 10^5 \pm 2.5 \times 10^4$ & $0.396 \pm 0.033$ &  & $\underline{5.728\times 10^5 \pm 6.2 \times 10^3}$ & $\underline{0.840 \pm 0.010}$ \\
$n^2/16$  &  & $2.015\times 10^5 \pm 1.2\times 10^4$ & $0.295 \pm 0.041$ &  & $\underline{2.858\times 10^5 \pm 3.3 \times 10^3}$ & $\underline{0.720\pm 0.057}$ \\
$n^2/32$ &  & $1.096\times 10^5 \pm 8.5 \times 10^3$ & $0.192 \pm 0.057$ &  & $\underline{1.450\times 10^5 \pm 2.2 \times 10^3}$ & $\underline{0.549 \pm 0.025}$ \\ \bottomrule
\end{tabular}%
\vspace{1em}
\begin{tabular}{@{}c@{}llr@{}}
\toprule
\multirow{2}{*}{\begin{tabular}[c]{@{}c@{}}Number of\\triplets\end{tabular}} & 
\multirow{2}{*}{} & \multicolumn{2}{c}{MulK3-AL}                           \\ \cmidrule(l){3-4} 
                                                                    &                   & \multicolumn{1}{c}{Revenue} & \multicolumn{1}{c}{AARI} \\ \midrule
$16n^2$  &  & $7.315\times 10^7 \pm 1.3 \times 10^5$ & $0.861 \pm 0.005$ \\
$8n^2$  &  & $3.636\times 10^7 \pm 9.7 \times 10^4$ & $0.855 \pm 0.003$ \\
$4n^2$  &  & $1.795\times 10^7 \pm 9.3\times 10^4$ & $0.830 \pm 0.009$ \\
$2n^2$  &  & $8.444\times 10^6 \pm 1.3\times 10^5$ & $0.677 \pm 0.041$ \\
$n^2$  &  & $3.728\times 10^6 \pm 1.4 \times 10^5$ & $0.540 \pm 0.016$ \\
$n^2/2$  &  & $1.220\times 10^6 \pm 1.7 \times 10^5$ & $0.347 \pm 0.050$ \\
$n^2/4$  &  & $1.531\times 10^5 \pm 8.9 \times 10^4$ & $0.077\pm 0.047$ \\
$n^2/8$  &  & $1.856\times 10^4 \pm 1.2\times 10^4$ & $0.011 \pm 0.011$ \\
$n^2/16$  &  & $4.026\times 10^3 \pm 8.0\times 10^3$ & $0.005 \pm 0.004$ \\
$n^2/32$ &  & $2.015\times 10^3 \pm 3.5 \times 10^3$ & $0.0003 \pm 0.001$ \\ \bottomrule
\end{tabular}%
\label{tab:triplets_planted_revenue_aari_n^2/2^k_seed_snr_1.5}
\end{table*}

\begin{figure}
  \centering
  \begin{subfigure}[ht]{0.45\linewidth}
    \includegraphics[width=\linewidth]{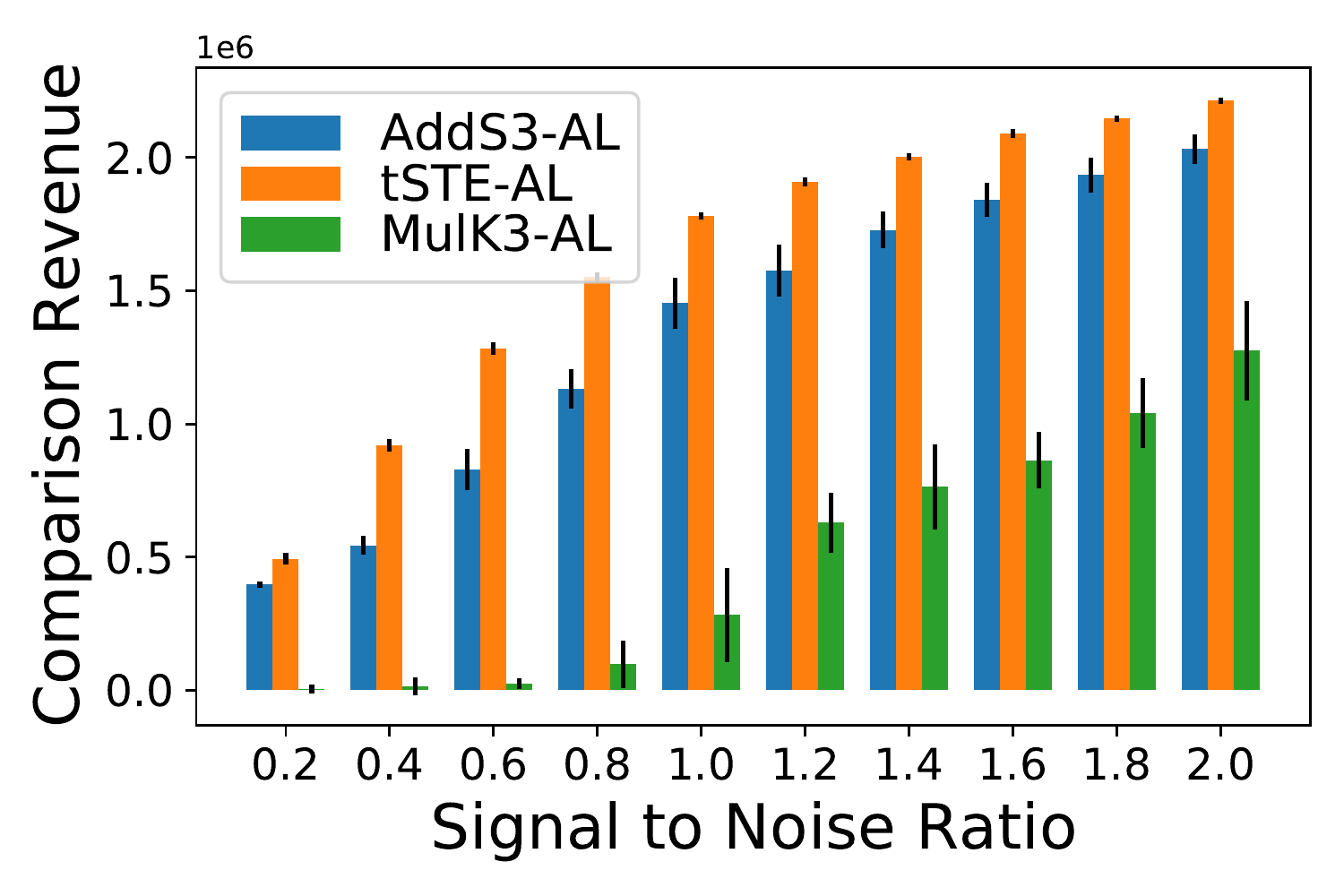}
    \caption{Revenue ($n^{2}/2$)}
  \end{subfigure}
  \begin{subfigure}[ht]{0.45\linewidth}
    \includegraphics[width=\linewidth]{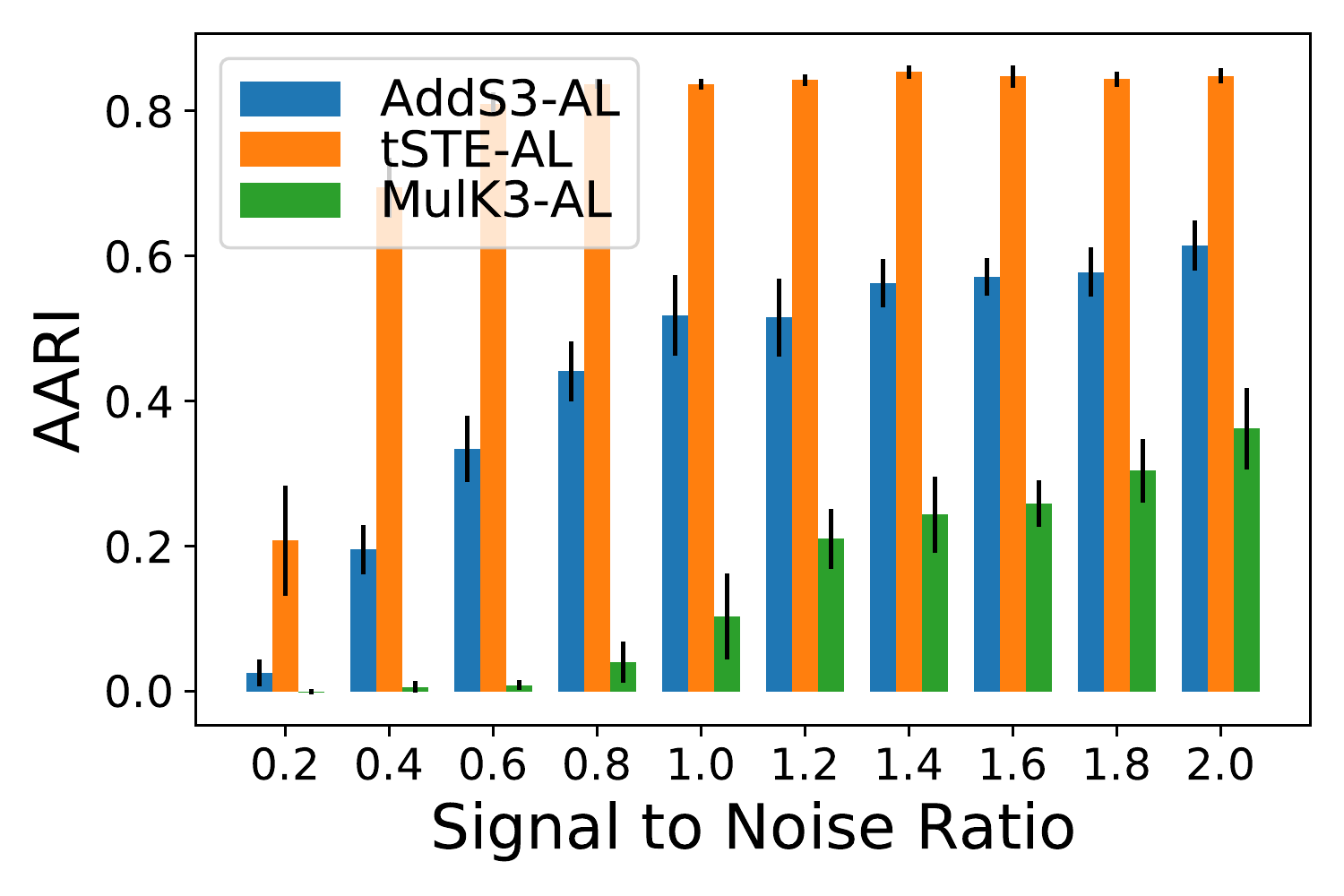}
    \caption{AARI ($n^{2}/2$)}
  \end{subfigure}
  
  \begin{subfigure}[b]{0.45\linewidth}
    \includegraphics[width=\linewidth]{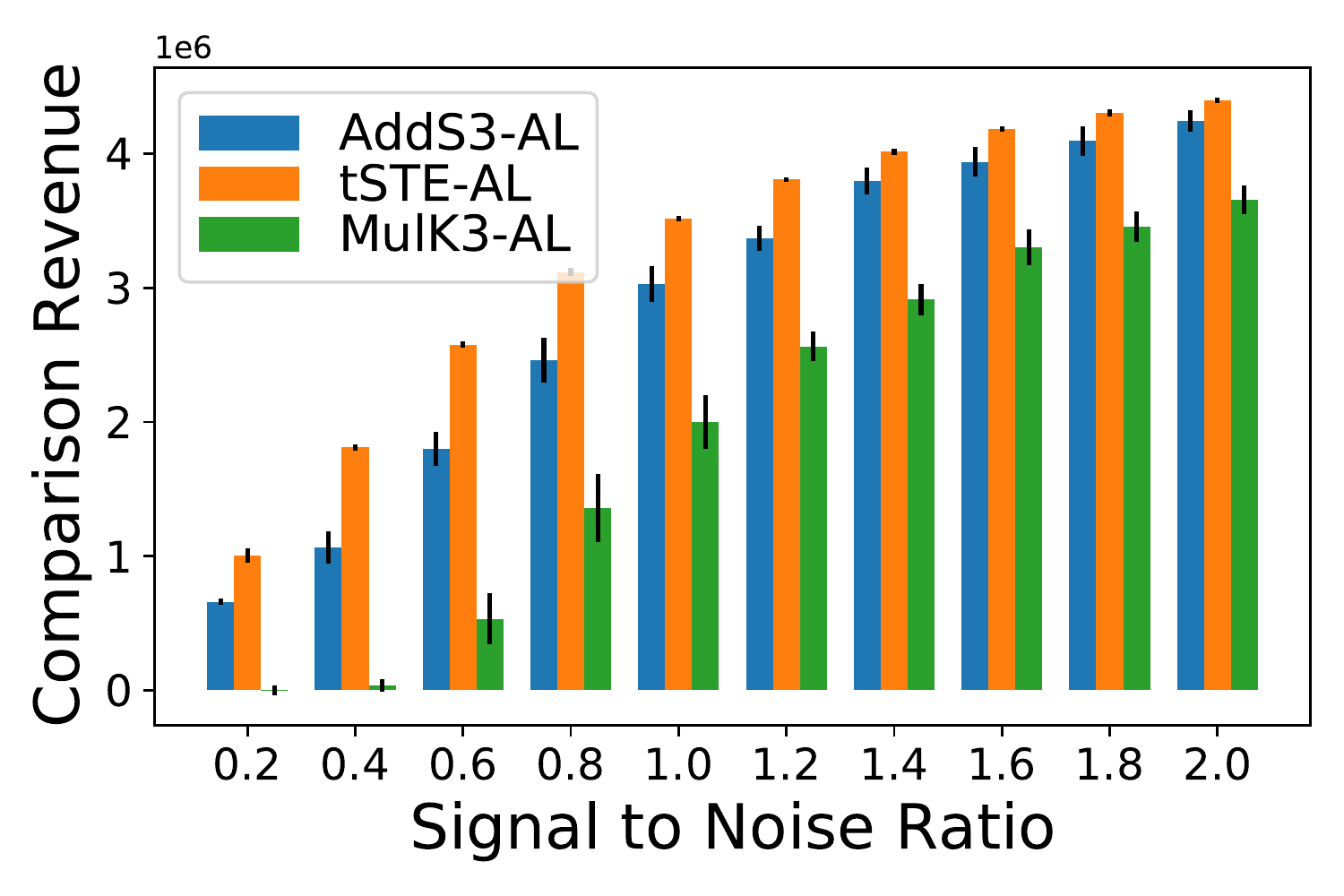}
    \caption{Revenue ($n^{2}$)}
  \end{subfigure}
  \begin{subfigure}[b]{0.45\linewidth}
    \includegraphics[width=\linewidth]{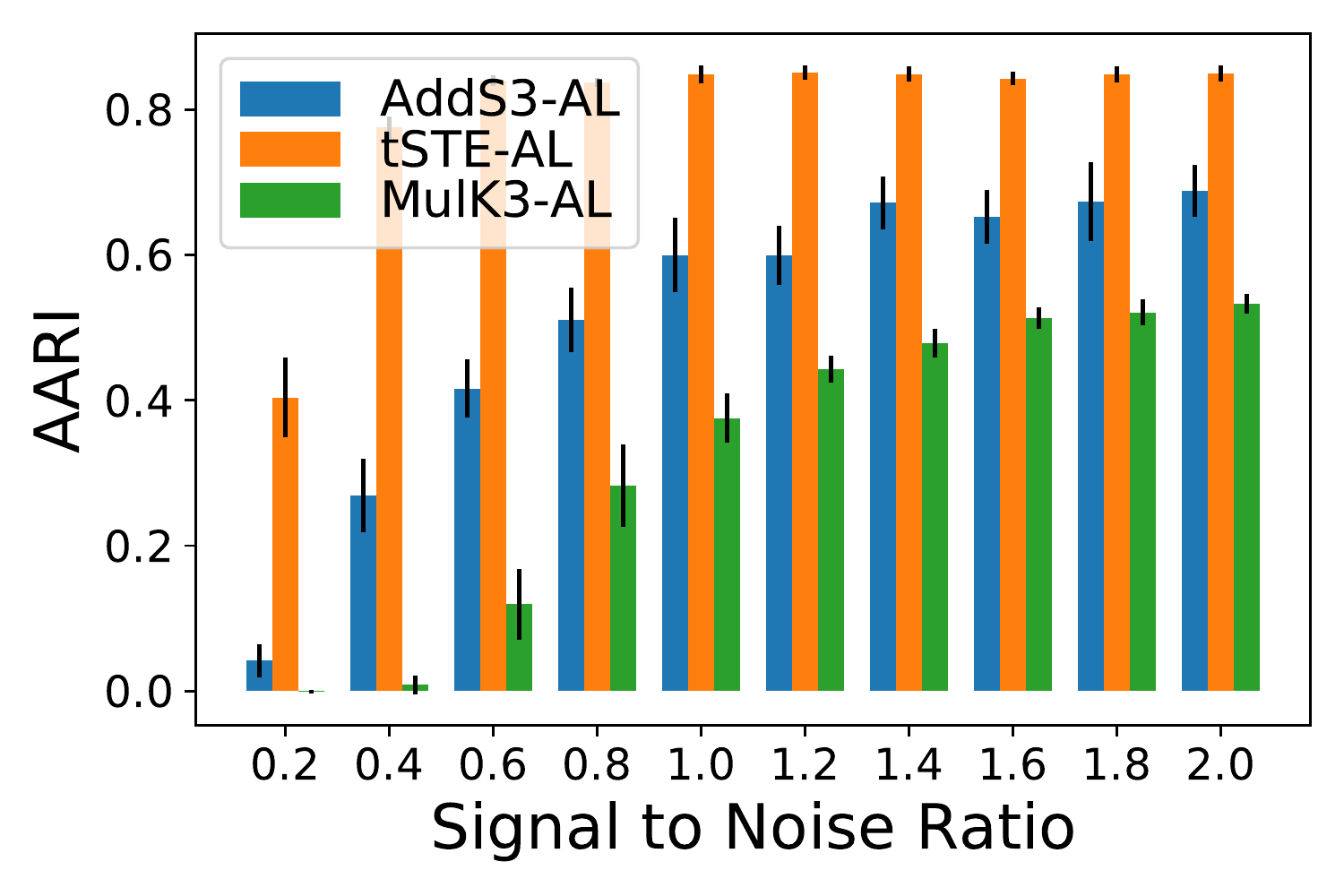}
    \caption{AARI ($n^{2}$)}
  \end{subfigure}
  
  \begin{subfigure}[b]{0.45\linewidth}
    \includegraphics[width=\linewidth]{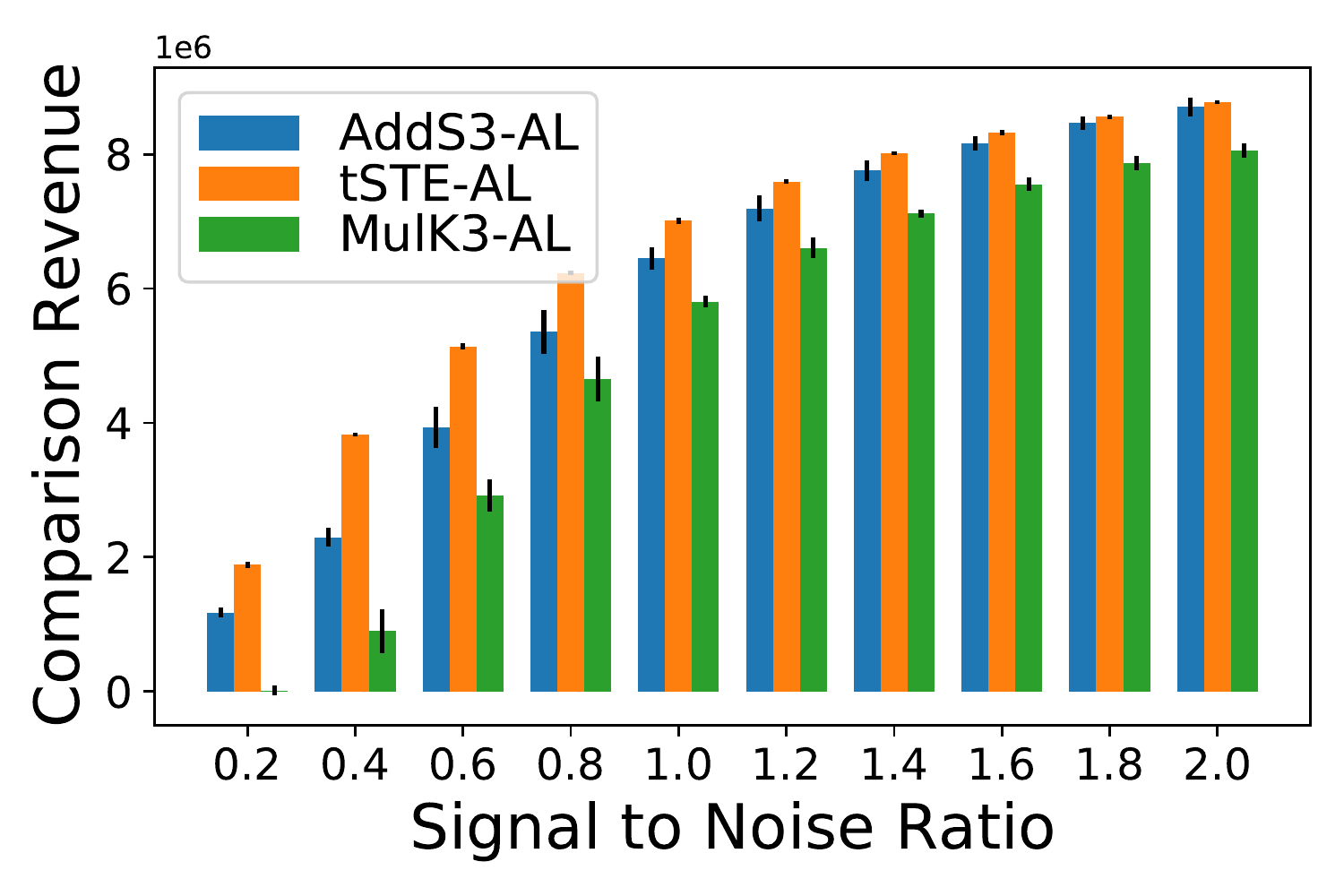}
    \caption{Revenue ($2n^{2}$)}
  \end{subfigure}
  \begin{subfigure}[b]{0.45\linewidth}
    \includegraphics[width=\linewidth]{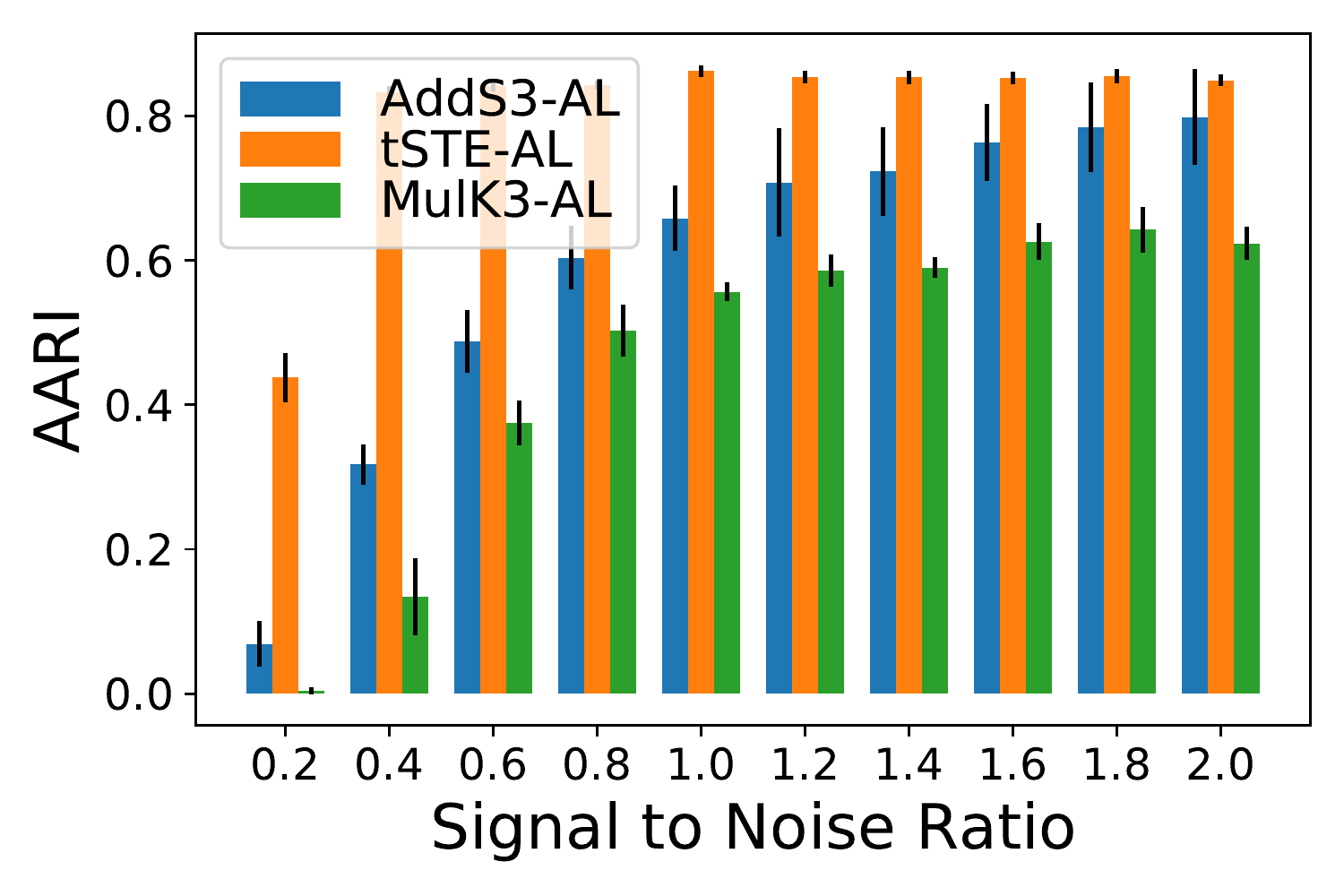}
    \caption{AARI ($2n^{2}$)}
  \end{subfigure}
  
  \caption{Revenue and AARI (higher is better) of several triplets-based methods using respectively $n^{2}/2$ comparisons (a-b), $n^2$ comparisons (c-d), and $2n^2$ comparisons (e-f). Given various signal to noise ratios, a higher revenue implies higher AARI values, that is better dendrograms.}
  \label{fig:triplets_planted_revenue_aari_appendix}
\end{figure}


\begin{figure}
  \centering
  \begin{subfigure}[b]{0.45\linewidth}
    \includegraphics[width=\linewidth]{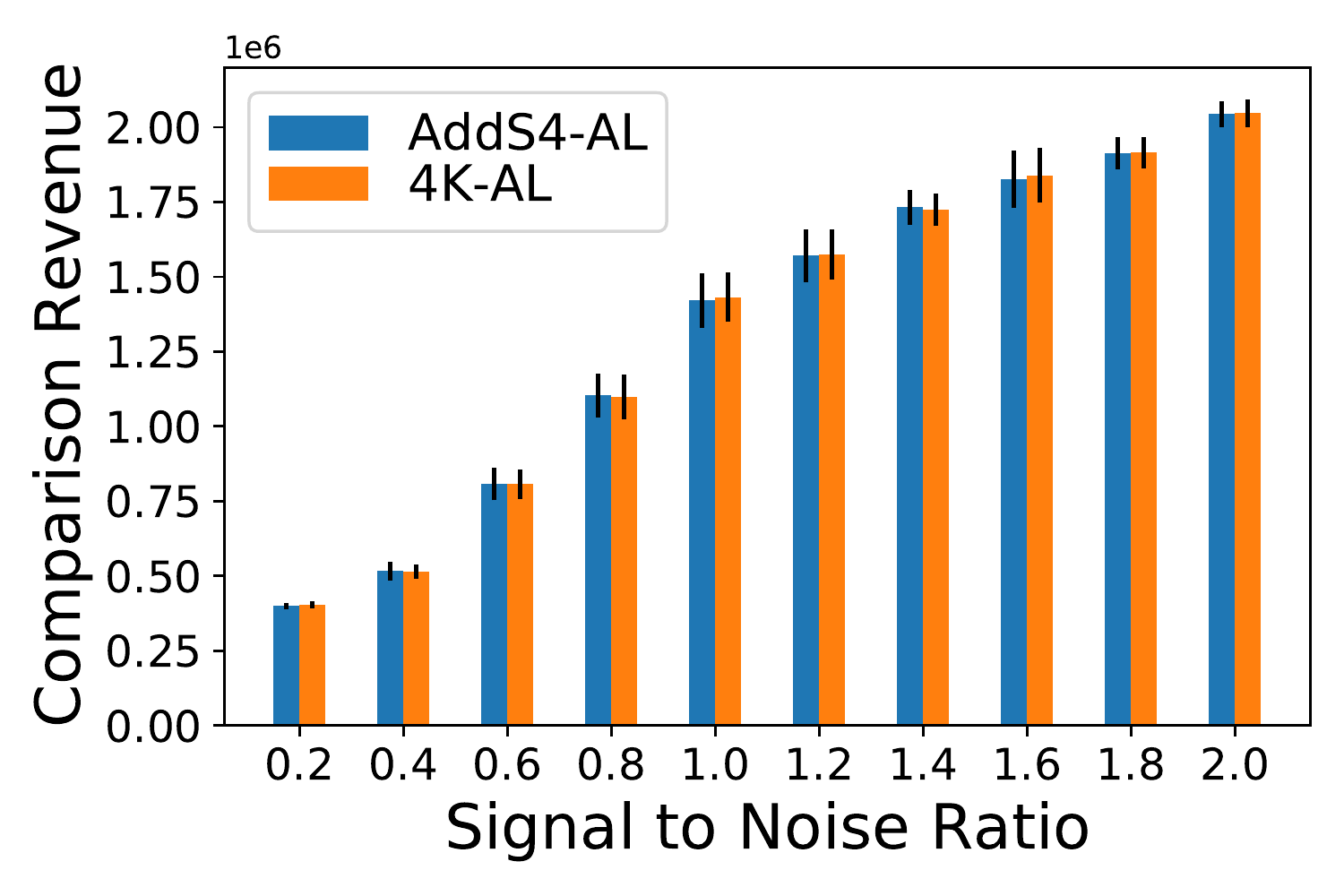}
    \caption{Revenue ($n^{2}/2$)}
  \end{subfigure}
  \begin{subfigure}[b]{0.45\linewidth}
    \includegraphics[width=\linewidth]{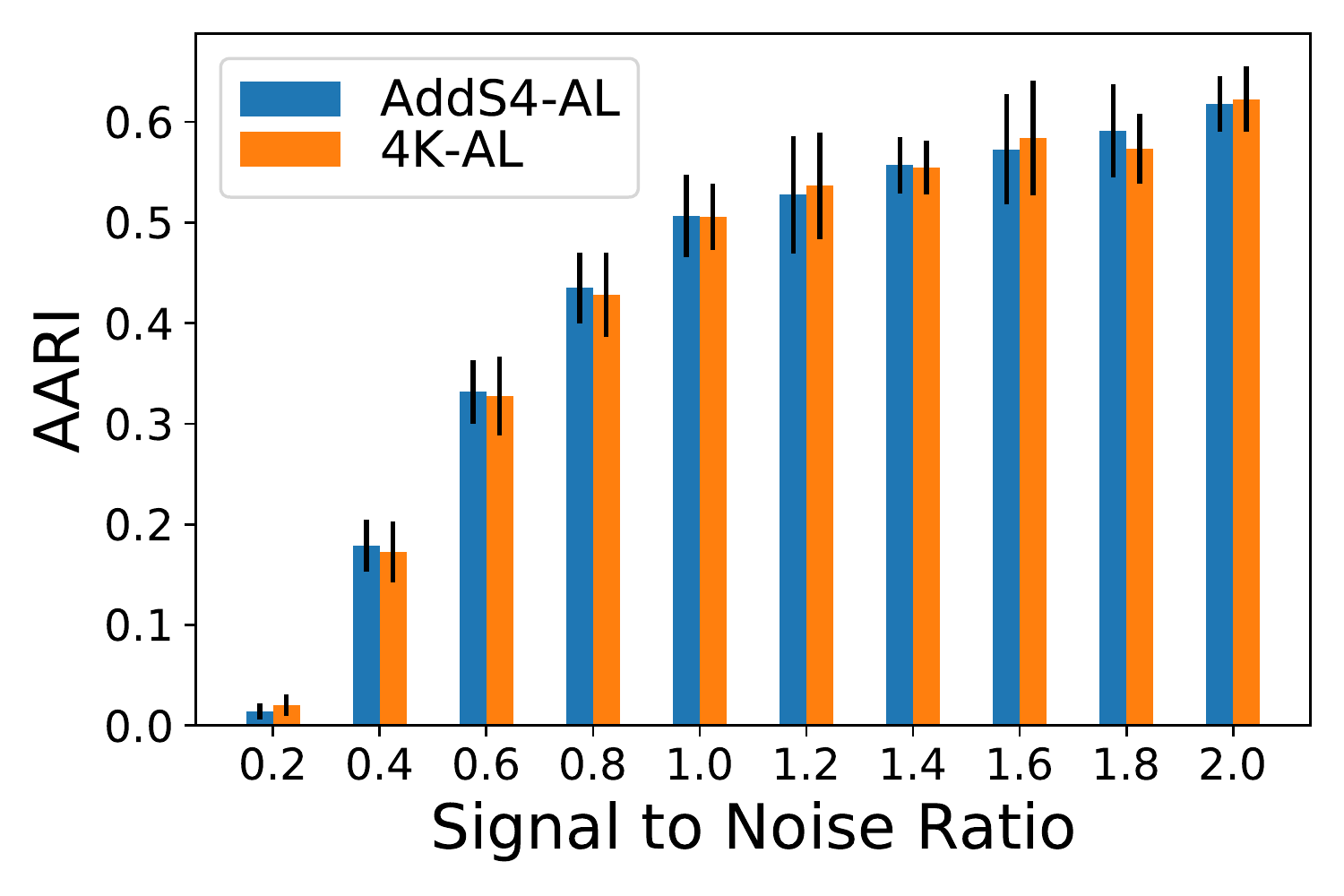}
    \caption{AARI ($n^{2}/2$)}
  \end{subfigure}
  
  \begin{subfigure}[b]{0.45\linewidth}
    \includegraphics[width=\linewidth]{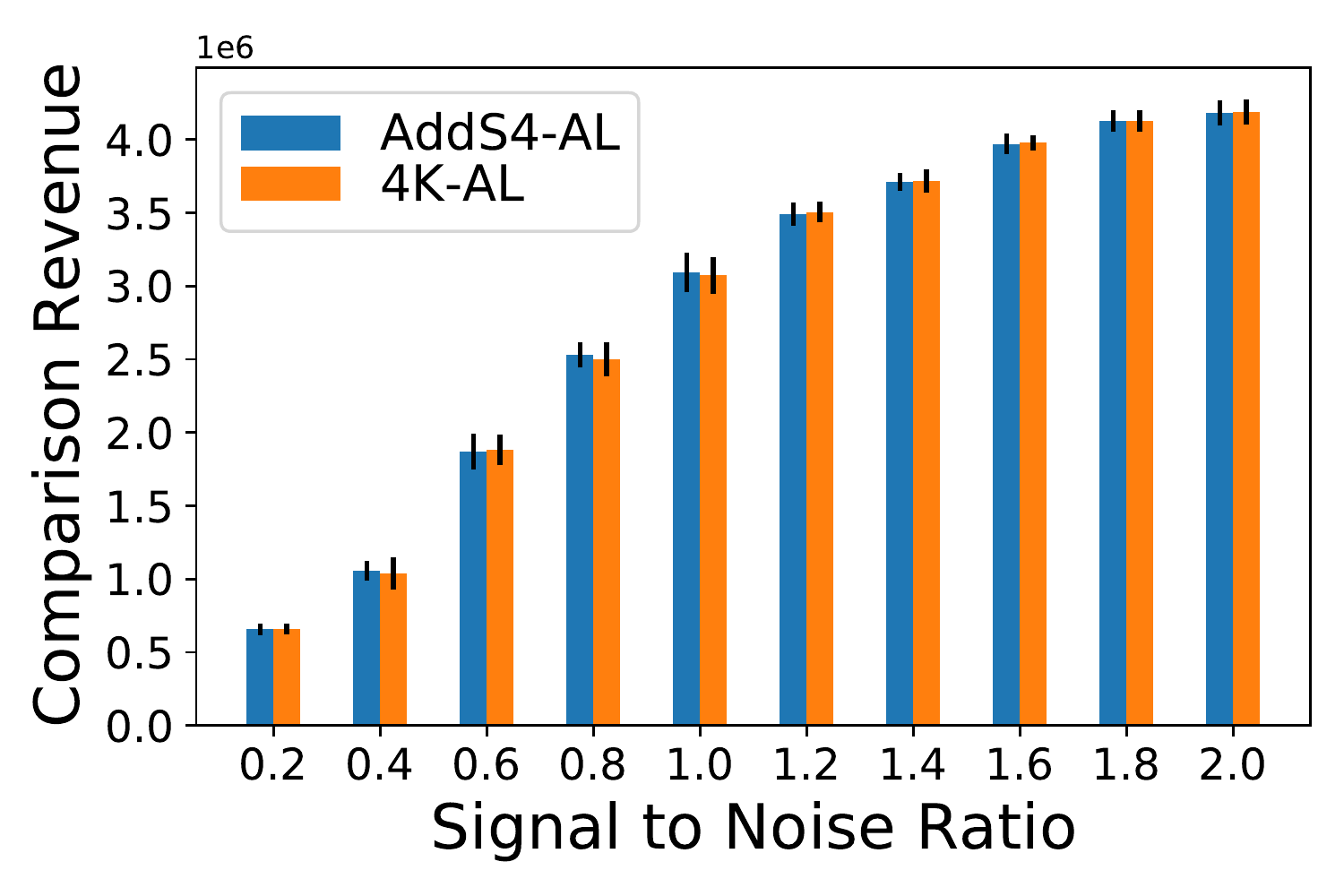}
    \caption{Revenue ($n^{2}$)}
  \end{subfigure}
  \begin{subfigure}[b]{0.45\linewidth}
    \includegraphics[width=\linewidth]{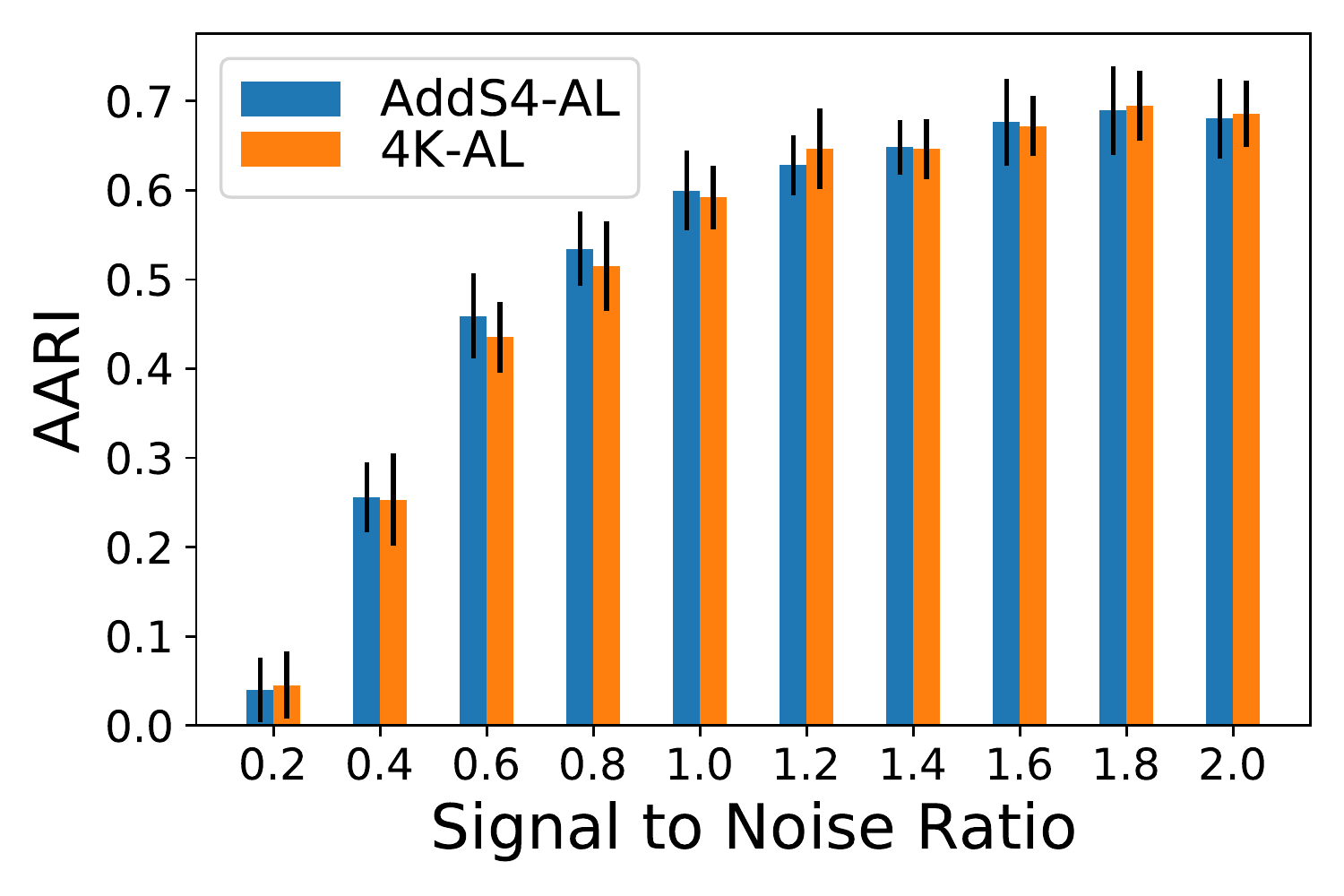}
    \caption{AARI ($n^{2}$)}
  \end{subfigure}
  
  \begin{subfigure}[b]{0.45\linewidth}
    \includegraphics[width=\linewidth]{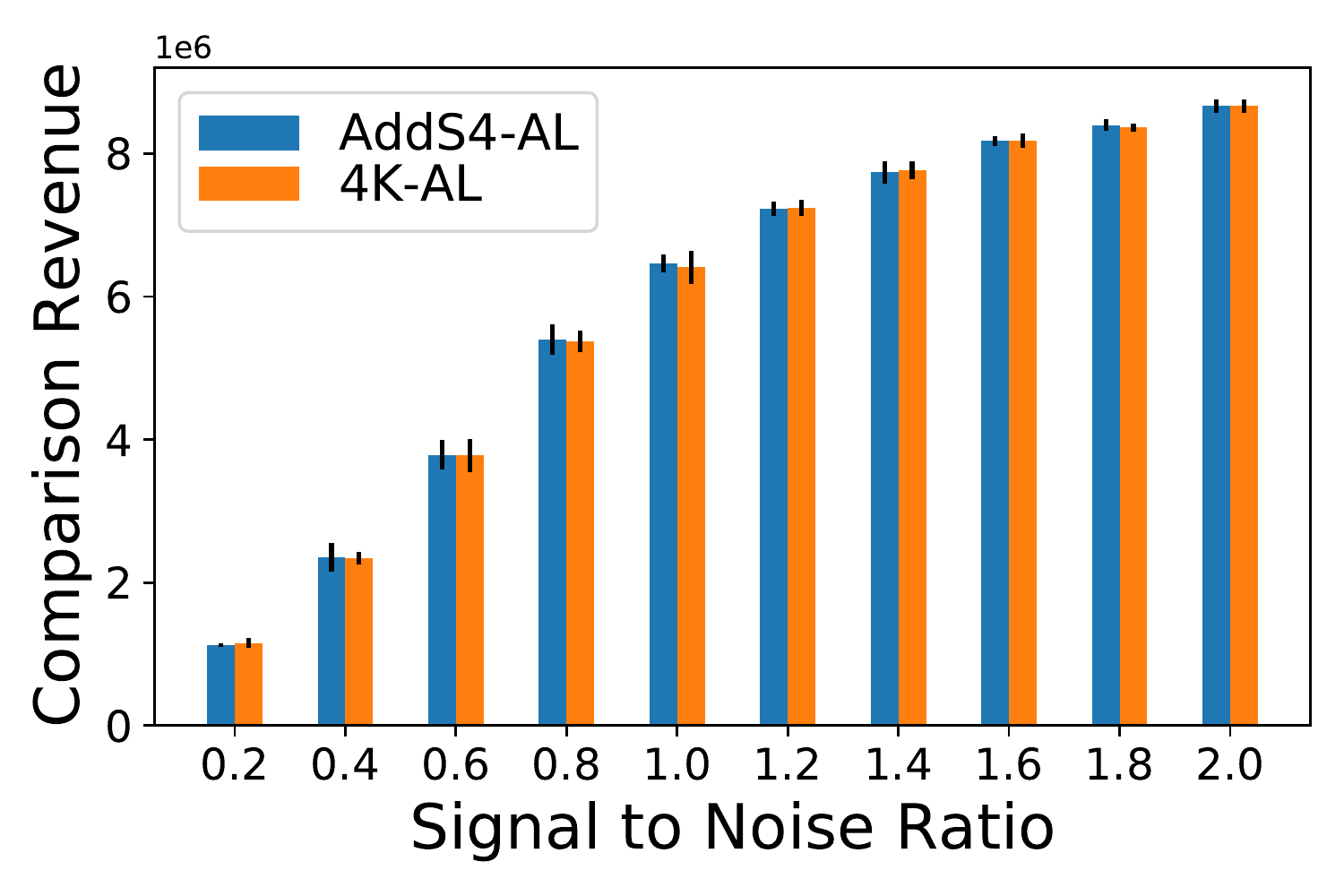}
    \caption{Revenue ($2n^{2}$)}
  \end{subfigure}
  \begin{subfigure}[b]{0.45\linewidth}
    \includegraphics[width=\linewidth]{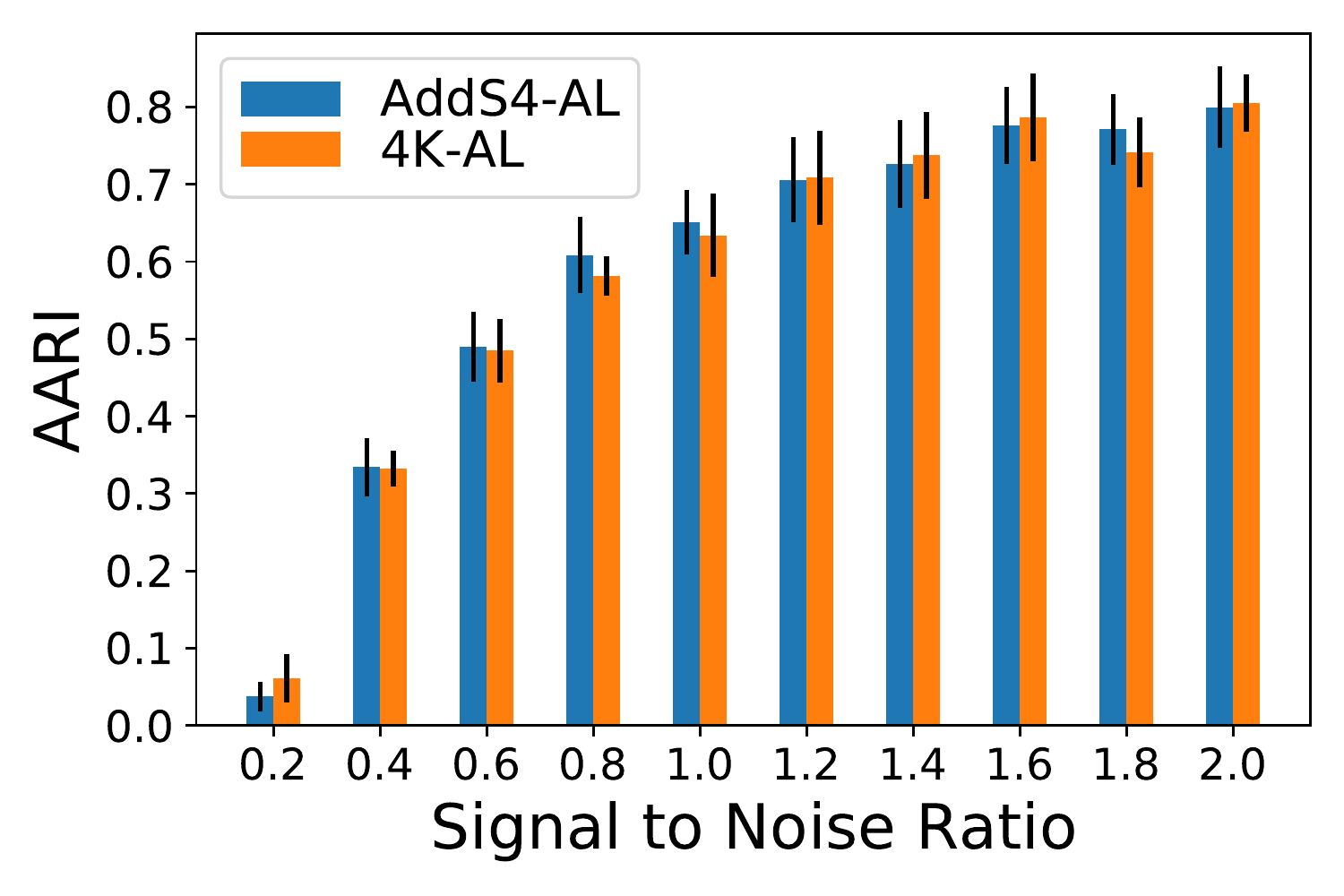}
    \caption{AARI ($2n^2$)}
  \end{subfigure}
  
  \caption{Revenue and AARI (higher is better) of several quadruplets-based methods using respectively $n^{2}/2$ comparisons (a-b), $n^2$ comparisons (c-d), and $2n^2$ comparisons (e-f). Given various signal to noise ratios, a higher revenue implies higher AARI values, that is better dendrograms.}
  \label{fig:quadruplets_planted_revenue_aari}
\end{figure}

\begin{figure}
  \centering
  \begin{subfigure}[ht]{0.4\linewidth}
    \includegraphics[width=\linewidth]{Plots/New_Plots/cost_0.5n2_tri_noise_bar.pdf}
    \caption{Revenue ($n^{2}/2$)}
  \end{subfigure}
  \begin{subfigure}[ht]{0.4\linewidth}
    \includegraphics[width=\linewidth]{Plots/New_Plots/aari_0.5n2_tri_noise_bar.pdf}
    \caption{AARI ($n^{2}/2$)}
  \end{subfigure}
  
  \begin{subfigure}[b]{0.4\linewidth}
    \includegraphics[width=\linewidth]{Plots/New_Plots/cost_n2_tri_noise_bar.pdf}
    \caption{Revenue ($n^{2}$)}
  \end{subfigure}
  \begin{subfigure}[b]{0.45\linewidth}
    \includegraphics[width=\linewidth]{Plots/New_Plots/aari_n2_tri_noise_bar.pdf}
    \caption{AARI ($n^{2}$)}
  \end{subfigure}
  
  \begin{subfigure}[b]{0.45\linewidth}
    \includegraphics[width=\linewidth]{Plots/New_Plots/cost_2n2_tri_noise_bar.pdf}
    \caption{Revenue ($2n^{2}$)}
  \end{subfigure}
  \begin{subfigure}[b]{0.45\linewidth}
    \includegraphics[width=\linewidth]{Plots/New_Plots/aari_2n2_tri_noise_bar.pdf}
    \caption{AARI ($2n^{2}$)}
  \end{subfigure}
  
  \caption{Revenue and AARI (higher is better) of several triplets-based methods using respectively $n^{2}/2$ comparisons (a-b), $n^2$ comparisons (c-d), and $2n^2$ comparisons (e-f) with $5\%$ noise. Given various signal to noise ratios, a higher revenue implies higher AARI values, that is better dendrograms.}
  \label{fig:triplets_planted_revenue_aari_0.5_noise}
\end{figure}

\begin{figure}
  \centering
  \begin{subfigure}[b]{0.45\linewidth}
    \includegraphics[width=\linewidth]{Plots/New_Plots/cost_0.5n2_quad_noise_bar.pdf}
    \caption{Revenue ($n^{2}/2$)}
  \end{subfigure}
  \begin{subfigure}[b]{0.45\linewidth}
    \includegraphics[width=\linewidth]{Plots/New_Plots/aari_0.5n2_quad_noise_bar.pdf}
    \caption{AARI ($n^{2}/2$)}
  \end{subfigure}
  
  \begin{subfigure}[b]{0.45\linewidth}
    \includegraphics[width=\linewidth]{Plots/New_Plots/cost_n2_quad_noise_bar.pdf}
    \caption{Revenue ($n^{2}$)}
  \end{subfigure}
  \begin{subfigure}[b]{0.45\linewidth}
    \includegraphics[width=\linewidth]{Plots/New_Plots/aari_n2_quad_noise_bar.pdf}
    \caption{AARI ($n^{2}$)}
  \end{subfigure}
  
  \begin{subfigure}[b]{0.45\linewidth}
    \includegraphics[width=\linewidth]{Plots/New_Plots/cost_2n2_quad_noise_bar.pdf}
    \caption{Revenue ($2n^{2}$)}
  \end{subfigure}
  \begin{subfigure}[b]{0.45\linewidth}
    \includegraphics[width=\linewidth]{Plots/New_Plots/aari_2n2_quad_noise_bar.pdf}
    \caption{AARI ($2n^2$)}
  \end{subfigure}
  
  \caption{Revenue and AARI (higher is better) of several quadruplets-based methods using respectively $n^{2}/2$ comparisons (a-b), $n^2$ comparisons (c-d), and $2n^2$ comparisons (e-f) with $5\%$ noise. Given various signal to noise ratios, a higher revenue implies higher AARI values, that is better dendrograms.}
  \label{fig:quadruplets_planted_revenue_aari_0.5_noise}
\end{figure}

\end{document}